\newtheorem{definition}{Definition}
\newtheorem{theorem}{Theorem}
\newtheorem{corollary}{Corollary}
\newtheorem{prop}{Proposition}
\newcommand\blfootnote[1]{%
  \begingroup
  \renewcommand\thefootnote{}\footnote{#1}%
  \addtocounter{footnote}{-1}%
  \endgroup
}
\begin{document}
\title{A Tunable Loss Function for Binary Classification} 

% %%% Single author, or several authors with same affiliation:
% \author{%
%   \IEEEauthorblockN{Stefan M.~Moser}
%   \IEEEauthorblockA{ETH Zürich\\
%                     ISI (D-ITET)\\
%                     CH-8092 Zürich, Switzerland\\
%                     Email: moser@isi.ee.ethz.ch}
% }

\author{%
   \IEEEauthorblockN{Tyler Sypherd\IEEEauthorrefmark{1},
                    Mario Diaz\IEEEauthorrefmark{1}\IEEEauthorrefmark{2},
                    Lalitha Sankar\IEEEauthorrefmark{1},
                    and Peter Kairouz\IEEEauthorrefmark{3}}
   \IEEEauthorblockA{\IEEEauthorrefmark{1}%
                     Arizona State University, \{tsypherd,mdiaztor,lsankar\}@asu.edu}
   \IEEEauthorblockA{\IEEEauthorrefmark{2}%
                     Centro de Investigaci\'on en Matem\'aticas A.C.,
                     diaztorres@cimat.mx}
   \IEEEauthorblockA{\IEEEauthorrefmark{3}%
                     Google AI,\ kairouz@google.com}
}
\maketitle

\begin{abstract}
We present $\alpha$-loss, $\alpha \in [1,\infty]$, a tunable loss function for binary classification that bridges log-loss ($\alpha=1$) and $0$-$1$ loss ($\alpha = \infty$). We prove that $\alpha$-loss has an equivalent margin-based form and is classification-calibrated, two desirable properties for a good surrogate loss function for the ideal yet intractable $0$-$1$ loss.
For logistic regression-based classification, we provide an upper bound on the difference between the empirical and expected risk for $\alpha$-loss at the critical points of the empirical risk by exploiting its Lipschitzianity along with recent results on the landscape features of empirical risk functions.
%For logistic regression-based classification, we show that the minimizers (OR STATIONARY POINTS??) of the true risk and the empirical risk for $\alpha$-loss are one-to-one. 
Finally, we show that $\alpha$-loss with $\alpha = 2$ performs better than log-loss on MNIST for logistic regression.
%when the trained models have simple architectures.  
%Simulations are provided to highlight the accuracy of this parameterized loss function.

%{\color{blue} Tyler: If these changes get approved, please change the last paragraph of the introduction accordingly.}
%{\color{green} I made the change. Waiting for final approval.}

\blfootnote{This material is based upon work supported by the National Science Foundation under Grant Nos. CCF-1350914 and CIF-1815261.}
\end{abstract}

\section{Introduction}

In learning theory, the performance of a classification algorithm in terms of accuracy, tractability, and convergence guarantees is contingent on the choice of a loss function. 
%Indeed, the behavior of the algorithm is directly linked to the properties of the loss function. %Thus, criteria should be evaluated to determine the legitimacy of any given loss function.
%Classically, there are three aspects of loss functions that are necessary to consider: (1) its accuracy relative to the hard decision making $0$\mdash{-}$1$ loss function; (2) its tractability in practice in terms of implementation; and finally, (3) how well it performs for specific classification algorithms. 
%To this end, there is a growing interest in quantifying desirable properties for loss functions such as \textit{classification calibration} \cite{bartlett2006convexity}, amenability to gradient descent like algorithms \cite{NIPS2008_3591}, and empirical landscape \cite{mei2016landscape}.
%These three desirable properties have been quantified in the literature via specific properties, namely, classification-calibration \cite{bartlett2006convexity}, amenability to gradient descent like algorithms \cite{NIPS2008_3591}, and empirical landscape \cite{mei2016landscape}, respectively. 
%
%\par For a feature vector $X \in \mathcal{X}$ and a corresponding binary label $Y \in \mathcal{Y}$, a canonical representation of the $0$\mdash{-}$1$ loss function is $\mathbbm{1}[h(X) \neq Y]$
Consider a feature vector $X \in \mathcal{X}$, an unknown finite label $Y \in \mathcal{Y}$, and a hypothesis test $h:\mathcal{X} \rightarrow \mathcal{Y}$. The canonical $0$-$1$ loss, given by $\mathbbm{1}[h(X) \neq Y]$, is considered an ideal loss function that captures the probability of incorrectly guessing the true label $Y$ using $h(X)$. However, since the $0$-$1$ loss is neither continuous nor differentiable, its practical application is intractable with state-of-the-art learning algorithms. As a result, there has been much interest in identifying surrogate loss functions that best approximate the $0$-$1$ loss. Common surrogate loss functions include logistic loss, squared loss, and hinge loss. 
\par
{For binary classification tasks, a hypothesis test $h:\mathcal{X}\to\{-1,1\}$ is typically replaced by a classification function $f:\mathcal{X} \rightarrow \overline{\mathbb{R}}$, where $\overline{\mathbb{R}} = \mathbb{R} \cup \{\pm{\infty}\}$. In this context, loss functions are often written in terms of a \textit{margin}, defined as the product of the label, $Y \in \{-1,1\}$, and the value of the classification function $f(X)$ (see,  \cite{lin2004note,bartlett2006convexity,nguyen2009,masnadi2009design}).}
%For ease of analysis, under binary classification, loss functions are often written in terms of a \textit{margin}, defined as the product of the label, $Y \in \{-1,1\}$ and a (soft decision) classification function, $f:\mathcal{X} \rightarrow \mathbb{R}$  (see,  \cite{lin2004note,bartlett2006convexity,nguyen2009,NIPS2008_3591}). 
In \cite{lin2004note}, Lin defines a margin-based loss function as Fisher consistent if, for any $x$ and a given posterior $P_{Y|X=x}$, its population minimizer has the same sign as the optimal Bayes classifier.
%, i.e., the classifier minimizing the misclassification error (error probability). 
In \cite{bartlett2006convexity}, Bartlett \textit{et al.}~introduce a stronger surrogate requirement of \textit{classification-calibration} wherein the loss function is Fisher consistent for any $P_{Y|X=x}$. 

Yet another property for a good surrogate loss function is captured by the effectiveness of the empirical risk minimizers in approximating the true risk minimizers, a property studied through the \textit{empirical landscape}.
%Yet another property for a good surrogate loss function is captured by its \textit{empirical landscape}, i.e., the behavior of the empirical risk relative to the true risk in determining the effectiveness of the empirical risk minimizers. 
% i.e., $\mathbb{E}_{Y|X}[\tilde{l}(Yf(X))|X=x]$ has the same sign function as the Bayes optimal classifier, $\text{sign}[\eta(x) - 1/2]$, for any $x \in \mathcal{X}$, where the true posterior is denoted $\eta: \mathcal{X} \rightarrow [0,1]$ such that $ \eta(x) = P(Y=1 | X = x)$. As in \cite{bartlett2006convexity}, $\eta(x)$ is abbreviated $\eta$ where the dependence of $\eta$ on $x$ is implied. Bartlett \textit{et al.} \cite{bartlett2006convexity} generalized this notion of Fisher consistency with the concept of classification-calibrated loss functions. A loss function is said to be classification-calibrated if for every posterior distribution on the probability simplex (with the exclusion of the uniform distribution) the loss function is Fisher consistent.
In \cite{mei2018landscape}, Mei \textit{et al.} prove that for general non-convex loss functions which satisfy certain regularity conditions, all critical features of the landscape including local minimizers/maximizers and saddle points of the empirical risk and the true risk are one-to-one, with the distance between corresponding features decreasing as $O\left(\sqrt{\log{n}/n}\right)$ for $n$ samples. 
%These evaluations follow in the spirit of recent results for other loss functions including empirical landscape analysis wherein the distance between the optimal and finitely computable classifiers are estimated as a function of the sample size.
%For a specific family of classifiers, in particular for a family of logistic regression based classifiers, we show that the loss function the minimizers of the true risk and the empirical risk are one-to-one. Simulations are provided to highlight the accuracy of this parameterized loss function.
\par In \cite{liao2018tunable}, Liao \textit{et al.} introduce $\alpha$-loss as a new loss function to model information leakage under different adversarial threat models. %This loss function acts on probabilistic estimates like the commonly used loss function, log-loss. 
We consider a more general learning setting and apply $\alpha$-loss for binary classification. We prove that $\alpha$-loss has an equivalent margin-based form which is classification-calibrated. For a family of logistic regression based classifiers, we use the Lipschitzianity of $\alpha$-loss and results in \cite{mei2018landscape} to upper bound the difference between the empirical and expected risk under $\alpha$-loss at the critical points of the empirical risk. 
%to show that the minimizers of $\alpha$-loss for the true risk and the empirical risk are one-to-one. 
Finally, for the MNIST dataset, we focus on a low capacity learning model using logistic regression (such models are desirable when tuning deep neural networks is challenging) to illustrate the higher classification accuracy of $\alpha$-loss ($\alpha >1$) relative to the oft-used cross entropy (log-loss).

\section{Preliminaries}

\subsection{$\alpha$-loss}
%Recently, Liao \textit{et al.} \cite{liao2018tunable} presented a new loss function to model adversarial learning in the context of privacy. This loss function, called alpha loss, acts on probabilistic estimates like the commonly used loss function, log-loss. 
%
%More specifically, let $\mathcal{Y}$ be a discrete set and 
Let $\mathcal{P}(\mathcal{Y})$ be the set of probability distributions over $\mathcal{Y}$. For $\alpha \in [1,\infty]$, Liao et al. \cite{liao2018tunable} define $\alpha$-loss $l^{\alpha}:\mathcal{Y} \times \mathcal{P}(\mathcal{Y}) \rightarrow \mathbb{R}_{+}$ as
\begin{equation} \label{def1}
l^{\alpha}(y,P_{Y}) := \begin{cases} 
      -\log{P_{Y}(y)} & \alpha = 1, \\
      \frac{\alpha}{\alpha - 1}[1 - P_{Y}(y)^{1 - 1/\alpha}] & \alpha \in (1,\infty), \\
      1 - P_{Y}(y) & \alpha = \infty.
\end{cases}
\end{equation} 
Note that for $(y,P_{Y})$ fixed, $l^{\alpha}(y,P_{Y})$ is continuous in $\alpha$. 

%It is possible to use $\alpha$-loss in the context of classification. 
Consider random variables $(X,Y) \sim P_{X,Y}$. Observing $X$, one can construct an estimate $\hat{Y}$ of $Y$ such that $Y - X - \hat{Y}$ form a Markov chain. One can use expected $\alpha$-loss to quantify the effectiveness of the estimated posterior $P_{\hat{Y}|X}$ as $\mathbb{E}_{X,Y}[l^{\alpha}(Y,P_{\hat{Y}|X})]$. In particular, 
\begin{equation} \label{inf1}
\mathbb{E}_{X,Y}\left[l^{1}(Y,P_{\hat{Y}|X})\right] = \mathbb{E}_{X}\left[H(P_{Y|X=x},P_{\hat{Y}|X=x})\right],
\end{equation}
where $H(P,Q) := H(P) + D_{\textnormal{KL}}(P\|Q)$ is the cross-entropy between $P$ and $Q$. Similarly,
\begin{equation} \label{inf2}
\mathbb{E}_{X,Y}[l^{\infty}(Y,P_{\hat{Y}|X})] = \mathbb{P}[Y \neq \hat{Y}],
\end{equation}
i.e., the expected $\alpha$-loss for $\alpha = \infty$ equals the probability of error. It can be shown that the expected $\alpha$-loss is continuous in $\alpha$, i.e., \eqref{inf1} and \eqref{inf2} result from the continuous extensions for $\alpha=1$ and $\alpha=\infty$, respectively. Thus, we see that the extremal points of expected $\alpha$-loss are expected log-loss and probability of error. %i.e.,
% \begin{align}
%     \lim_{\alpha\downarrow1} \mathbb{E}_{X,Y}[l^{\alpha}(Y,P_{\hat{Y}|X})] &= \mathbb{E}_{X,Y}[l^{1}(Y, P_{\hat{Y}|X})],\\
%     \lim_{\alpha\to\infty} \mathbb{E}_{X,Y}[l^{\alpha}(Y,P_{\hat{Y}|X})] &= \mathbb{E}_{X,Y}[l^{\infty}(Y, P_{\hat{Y}|X})].
% \end{align}

{
}

%%%%%%%
\subsection{Binary Classification in Learning}

%We consider a binary classification setting with soft classifiers. 
Let $S_{n} = \{(X_{i},Y_{i}) : i = 1,\ldots,n\}$ be a training dataset where, for each $i$, $X_{i} \in \mathcal{X} \subset \mathbb{R}^{d}$ is the feature vector and $Y_{i} \in \mathcal{Y} = \{-1,1\}$ is the class label. We assume that the samples $\{(X_{i},Y_{i}) : i=1,\ldots,n\}$ are independently drawn from an unknown distribution $P_{X,Y}$. There are multiple approaches (and nomenclatures) to classification \cite{lin2004note,bartlett2006convexity,nguyen2009,masnadi2009design}; in particular, we consider two alternative approaches, namely, using \textit{soft classifiers} and using \textit{classification functions}. 

\textbf{Soft classifier}: In this approach, the objective of the learner is to construct, based on the training dataset $S_n$, a soft classifier $g:\mathcal{X}\to[0,1]$ capable of predicting the likelihood of a label of previously unseen feature vectors. More specifically, for each $x\in\mathcal{X}$, $g(x)$ estimates the probability of the event $\{Y=1\}$ given $\{X=x\}$. Usually, the learner selects a soft classifier by minimizing a loss function over a family of soft classifiers. Note that every soft classifier determines a set of beliefs and vice versa. Indeed, given a soft classifier $g$, we can define $P_{\hat{Y}|X}$ by taking $P_{\hat{Y}|X}(1|x) := g(x)$. Conversely, given a set of beliefs $P_{\hat{Y}|X}$, we can define a soft classifier $g(x) = P_{\hat{Y}|X}(1|x)$. 

Observe that the soft classification construct defined above makes $\alpha$-loss in \eqref{def1} a natural fit as a loss function. Indeed, one can define the expected $\alpha$-loss (true risk) of a soft classifier as
\begin{equation}
R_{l^{\alpha}}(g) = \mathbb{E}_{X,Y}[l^{\alpha}(Y, P_{\hat{Y}|X})],
\end{equation}
where $P_{\hat{Y}|X}$ is the set of beliefs associated to $g$. Analogously, we define the empirical $\alpha$-loss as 
\begin{equation}
    \hat{R}_{l^{\alpha}}(g) = \frac{1}{n} \sum_{i=1}^{n} l^{\alpha}(y_{i},P_{\hat{Y}|X=x_{i}}).
\end{equation}
Finally, we denote the conditional risk of the $\alpha$-loss by
\begin{equation}
\label{eq:DefConditionalRisk}
C_{l^{\alpha}}(g) = \mathbb{E}_{Y|X}[l^{\alpha}(Y,P_{\hat{Y}|X=x})].
\end{equation}
Observe that $R_{l^{\alpha}}(g) = \mathbb{E}_{X}[C_{l^{\alpha}}(g)]$.

\textbf{Classification function}: As an alternative approach, a learner can select a classification function $f:\mathcal{X}\rightarrow \overline{\mathbb{R}}$ by minimizing a loss function over a given family of classification functions. 
Observe that any such $f$ can yield a (hard decision) hypothesis $h(X) = \text{sign}(f(X))$. The value $f(x)$ can be regarded as the confidence on the value of $Y$ given $\{X=x\}$; a large value of $f(x)$ corresponds to a high confidence on the event $\{Y=1\}$ given $\{X=x\}$, while a large value of $-f(x)$ corresponds to a high confidence on the event $\{Y=-1\}$.

For this setting, margin-based loss functions have been proposed as a meaningful family of loss functions. A loss function is said to be margin-based if, for all $x\in\mathcal{X}$ and $y\in\mathcal{Y}$, the risk associated to a pair $(y,f(x))$ is given by $\tilde{l}(yf(x))$ for some function $\tilde{l}: \overline{\mathbb{R}}\to\mathbb{R}_{+}$. In this case, the risk of the pair $(y,f(x))$ only depends on the product $yf(x)$, where the product $yf(x)$ is called the margin. Observe that a negative margin corresponds to a mismatch between the signs of $f(x)$ and $y$, i.e., a classification error by $f$. Similarly, a positive margin corresponds to a match between the signs of $f(x)$ and $y$, i.e., a correct classification by $f$. Hence, most margin-based losses have a graph similar to those depicted in Figure~\ref{Fig:MarginBasedLosses}(a). Since margin-based loss functions synthesize two quantities ($Y$ and $f$) into a single margin, they are commonly found in the binary classification literature \cite{bartlett2006convexity,lin2004note,janocha2017loss}. The risk of a classification function $f$ with respect to (w.r.t.)~a margin-based loss function $\tilde{l}$ is defined as
\begin{equation}
R_{\tilde{l}}(f) = \mathbb{E}_{X,Y}[\tilde{l}(Yf(X))].
\end{equation}
For notational convenience, the risk of the $0$-$1$ loss is denoted by $R(f)$, i.e.,
\begin{equation}
R(f) = \mathbb{E}[\mathbbm{1}(\text{sign}(f(X)) \neq Y)].
\end{equation}

We now introduce a margin-based $\alpha$-loss. Let $\sigma:\overline{\mathbb{R}}\to[0,1]$ be the sigmoid function, i.e.,
\begin{equation}
\label{eq:DefSigmoid}
    \sigma(z) = \frac{1}{1+e^{-z}},
\end{equation}
Observe that $\sigma$ is invertible and $\sigma^{-1}:[0,1]\to\overline{\mathbb{R}}$ is given by
\begin{equation}
    \sigma^{-1}(z) = \log\left(\frac{z}{1-z}\right).
\end{equation}

\begin{definition} 
We define the margin $\alpha$-loss $\tilde{l}^{\alpha}: \overline{\mathbb{R}} \to \mathbb{R}_{+}$ as
\begin{equation}
\label{mar}
\tilde{l}^{\alpha}(z) := \begin{cases} 
      -\log(\sigma(z)) & \alpha = 1, \\
      \frac{\alpha}{\alpha - 1}\left(1 - \sigma(z)^{1 - 1/\alpha}\right) & \alpha \in (1,\infty), \\
      1 - \sigma(z) & \alpha = \infty.
\end{cases}
\end{equation}
\end{definition}

In Figure~\ref{Fig:MarginBasedLosses}(a), we plot the margin-based $\alpha$-loss for different values of $\alpha$. Observe that, on the one hand, the penalty assigned to misclassified examples decreases as $\alpha$ increases. In practice, this decrease is desirable as the classification error only depends on the prediction itself and not in the particular confidence (margin). On the other hand, the absolute value of the derivative of $\tilde{l}^\alpha$ decreases as $\alpha$ increases. This behavior makes the computation of the optimal classification function more challenging as $\alpha$ increases (as evidenced by the intractability of $0$-$1$ loss).

\subsection{Classification-Calibration}
%Similar to soft classifiers, a learner usually selects a classification function $f:\mathcal{X}\rightarrow \mathbb{R}$ by minimizing a loss function $l:\mathcal{Y}\times\mathbb{R}\to\mathbb{R}_+$ over a given family of classification functions. 

An important concept in the analysis and design of margin-based losses is that of classification-calibration. To define this, we begin by defining the true posterior $\eta: \mathcal{X} \rightarrow [0,1]$ as $ \eta(x) = P_{Y|X}(y=1 | x)$. As in \cite{bartlett2006convexity}, we abbreviate $\eta(x)$ as $\eta$, making implicit the dependence on $x$.

\begin{definition}[{\cite[Definition~1]{bartlett2006convexity}}]
A margin-based loss function $\tilde{l}$ is said to be classification-calibrated if, for every $\eta \neq 1/2$,
\begin{equation}
\label{cc} \inf_{f:f(2\eta - 1) \leq 0}(\eta \tilde{l}(f) + (1 - \eta)\tilde{l}(-f)) > \inf_{f\in\mathbb{R}}(\eta \tilde{l}(f) + (1 - \eta)\tilde{l}(-f)).
\end{equation}
\end{definition}

The conditional risk of $f$ given $\{X=x\}$ is given by
\begin{equation}
    \mathbb{E}_{Y|X=x}[\tilde{l}(Yf(x))] = \eta(x) \tilde{l}(f(x)) + (1 - \eta(x))\tilde{l}(-f(x)).
\end{equation}
%where $\eta(x) := \mathbb{P}(Y=1|X=x)$. 
If $\tilde{l}$ is a classification-calibrated margin-based loss function, then the minimum conditional risk given $\{X=x\}$ is attained by a $z_x^*$ such that $\textnormal{sign}(z_x^*) = \textnormal{sign}(2\eta(x)-1)$. Thus, assuming that the posterior distribution $\eta$ is known, the optimal classification function for $\tilde{l}$, namely $f^*(x) := z_x^*$, gives rise to the optimal classification function for the $0$-$1$ loss, namely the Bayes decision rule $\textnormal{sign}(2\eta(x)-1)$.

The following proposition establishes another important consequence of classification-calibration; we will use it in the sequel.

\begin{prop}[{\cite[Theorem~3]{bartlett2006convexity}}] \label{prop1}
Assume that $\tilde{l}$ is a classification-calibrated margin-based loss function. Then, for every sequence of measurable functions $(f_{i})_{i=1}^\infty$ and every probability distribution on $\mathcal{X} \times \mathcal{Y}$,
\begin{equation}
\lim_{i\to\infty} R_{\tilde{l}}(f_{i}) = R_{\tilde{l}}^* \text{ implies that } \lim_{i\to\infty} R(f_{i}) = R^*,
\end{equation}
where $R_{\tilde{l}}^* := \min_f R_{\tilde{l}}(f)$ and $R^* := \min_f R(f)$.
\end{prop}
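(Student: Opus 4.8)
The plan is to reproduce the argument of \cite{bartlett2006convexity} via the so‑called $\psi$‑transform. First I would introduce, for $\eta\in[0,1]$, the optimal conditional $\tilde{l}$‑risk
$H(\eta) := \inf_{z\in\overline{\mathbb{R}}}\left(\eta\tilde{l}(z)+(1-\eta)\tilde{l}(-z)\right)$
and its restriction to ``incorrectly signed'' predictions
$H^-(\eta) := \inf_{z:\,z(2\eta-1)\le 0}\left(\eta\tilde{l}(z)+(1-\eta)\tilde{l}(-z)\right)$.
Both are concave (infima of affine functions of $\eta$), hence continuous on $(0,1)$, and $H^-\ge H$ with equality at $\eta=1/2$. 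I would then define $\psi:[0,1]\to\mathbb{R}$ as the largest convex lower bound (Fenchel biconjugate) of the map $\theta\mapsto H^-\!\left(\tfrac{1+\theta}{2}\right)-H\!\left(\tfrac{1+\theta}{2}\right)$, which measures, in terms of the signed margin of $\eta$ about $1/2$, how much conditional risk one forfeits by committing to the Bayes‑suboptimal sign. By construction $\psi$ is convex and nondecreasing on $[0,1]$, $\psi(0)=0$, and $0\le\psi\le H^-\!\left(\tfrac{1+\cdot}{2}\right)-H\!\left(\tfrac{1+\cdot}{2}\right)$.

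Second, I would establish the key pointwise‑to‑aggregate inequality
\begin{equation}
\psi\!\left(R(f)-R^*\right)\ \le\ R_{\tilde{l}}(f)-R_{\tilde{l}}^*
\end{equation}
valid for every measurable $f$ and every $P_{X,Y}$. The pointwise ingredient is that, conditioned on $\{X=x\}$, the excess $0$‑$1$ risk incurred by $f$ equals $|2\eta(x)-1|$ when $\mathrm{sign}(f(x))\ne\mathrm{sign}(2\eta(x)-1)$ and is $0$ otherwise, whereas in the former case the excess conditional $\tilde{l}$‑risk is at least $H^-(\eta(x))-H(\eta(x))$; combining this with the definition of $\psi$ and its monotonicity yields $\psi\big(\text{pointwise excess }0\text{‑}1\text{ risk}\big)\le\text{pointwise excess }\tilde{l}\text{‑risk}$, and Jensen's inequality for the convex $\psi$ (using $\psi(0)=0$ to handle the correctly classified region) lifts this to the displayed form after taking $\mathbb{E}_X$.

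Third, I would invoke the characterization that classification‑calibration of $\tilde{l}$ (Definition after \eqref{cc}) is equivalent to $\psi(\theta)>0$ for all $\theta\in(0,1]$: the constrained infimum in \eqref{cc} strictly exceeding the unconstrained one for every $\eta\ne1/2$ says exactly that $H^-\!\left(\tfrac{1+\theta}{2}\right)-H\!\left(\tfrac{1+\theta}{2}\right)>0$ for $\theta\ne0$, and because $\psi$ is convex with $\psi(0)=0$ this strict positivity is preserved under convexification. The conclusion is then immediate: if $R_{\tilde{l}}(f_i)\to R_{\tilde{l}}^*$, the key inequality forces $\psi\!\left(R(f_i)-R^*\right)\to 0$, and since $\psi$ is continuous at $0$ with $\psi(0)=0$ and $\psi>0$ on $(0,1]$, we must have $R(f_i)-R^*\to 0$, i.e.\ $R(f_i)\to R^*$.

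I expect the main obstacle to be the second step — proving the key inequality — and within it the verification that the $\psi$‑transform correctly captures the pointwise excess‑risk comparison \emph{uniformly} in $\eta$, together with the measurability and infimum‑attainment subtleties needed to justify the pointwise bound before integrating; getting the Jensen interchange exactly right (rather than with a spurious constant) is the delicate part. A secondary nuisance is the continuity/behavior of $\psi$ near the endpoints $\theta\in\{0,1\}$, which must be pinned down to pass from $\psi(R(f_i)-R^*)\to0$ to $R(f_i)-R^*\to0$, and the short convexity argument showing that convexification does not destroy strict positivity of the transform.
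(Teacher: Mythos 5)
The paper gives no proof of this proposition at all --- it is imported verbatim from \cite[Theorem~3]{bartlett2006convexity} --- and your sketch is precisely the $\psi$-transform argument used there: define $H$ and $H^-$, convexify the gap $\theta\mapsto H^-\!\big(\tfrac{1+\theta}{2}\big)-H\!\big(\tfrac{1+\theta}{2}\big)$ into $\psi$, prove $\psi\!\left(R(f)-R^*\right)\le R_{\tilde{l}}(f)-R_{\tilde{l}}^*$, and use that classification-calibration is equivalent to $\psi>0$ on $(0,1]$. The outline is correct and correctly flags the genuinely delicate points (the measurable-selection fact $R_{\tilde{l}}^*=\mathbb{E}[H(\eta(X))]$ needed for the Jensen step, and the lemma that convexification preserves strict positivity, which requires continuity of the gap function), so the only difference from the paper is that the paper delegates all of this to the citation.
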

%For every sequence of measurable functions fi : X → R and every probability distribution
%on X × {±1},
%Rφ(fi) → Rφ∗ implies R(fi) → R∗.
%WRITE LIMIT POINTS. Check scalar factor in figure two. Check classification calibration arguments for $\alpha = 1$ and $\alpha = \infty$. 
%begin{figure}[1]
% \centering
%   	\includegraphics[width=0.3\textwidth]{margin.jpg}
%   % where an .eps filename suffix will be assumed under latex,
%   %   % and a .pdf suffix will be assumed for pdflatex
%   \caption{Simulation results.}
%	\label{fig:sim}
%\end{figure}
%%%%%%%%%%%%%%%%%%%%%%%%%
%\begin{figure}[ht!]
%%\centering
%\hspace{-1.35cm}
%\includegraphics[width=0.6125\textwidth]{plop3.png}
%\caption{(a) Margin-based $\alpha$-loss, as a function of the margin $z=yf(x)$; (b) minimum conditional risk for different values of $\alpha$. %\label{Fig:MarginBasedLosses}}
%\end{figure}
%%%%%%%%%%%%%%%%%%%%%%%%%%
\begin{figure}[ht!]
%%\centering
\vspace{-.4cm}
%\hspace{-cm}
\includegraphics[width=0.5\textwidth,trim=160 20 100 25,clip]{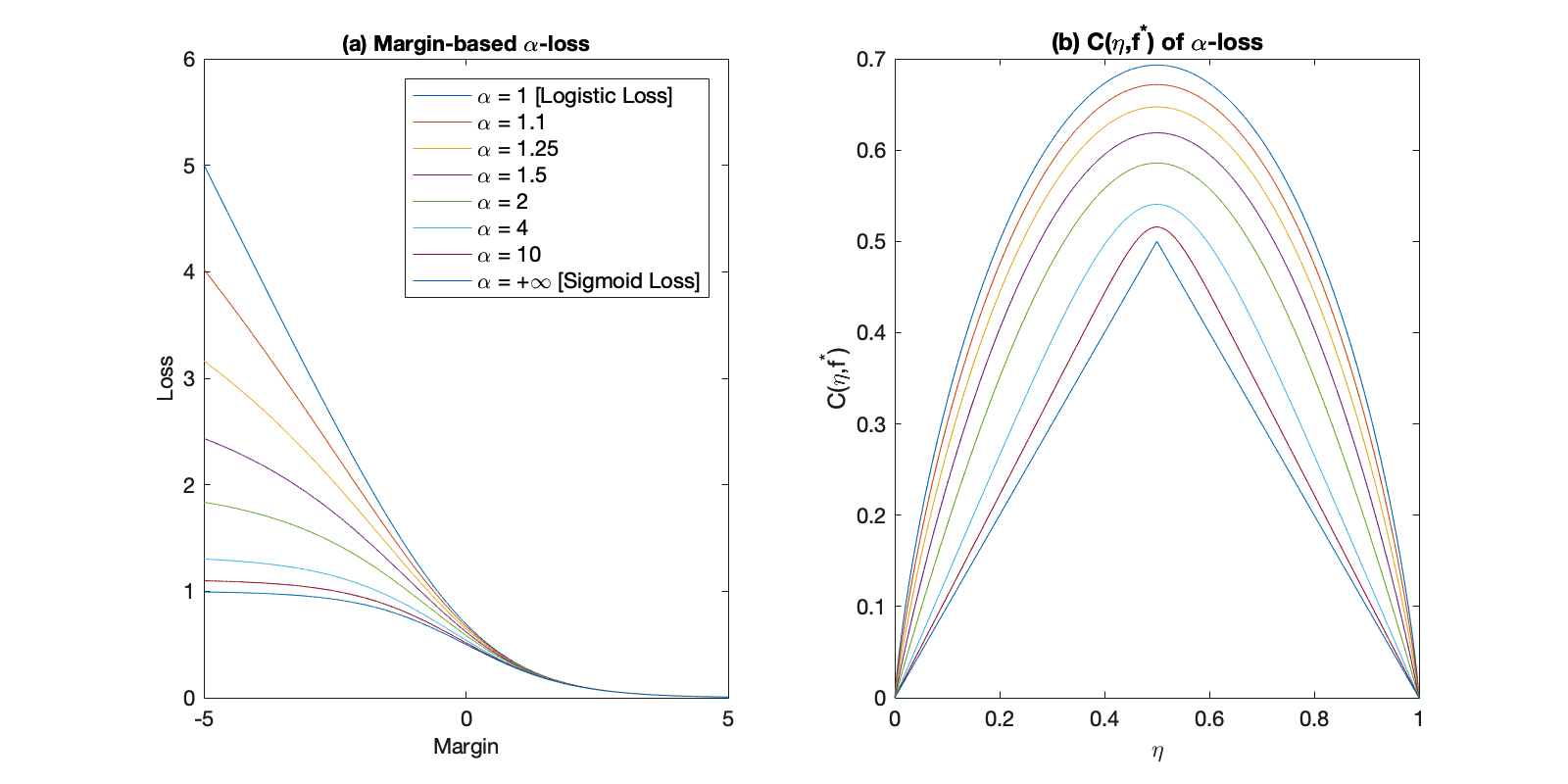}
\caption{(a) Margin-based $\alpha$-loss, as a function of the margin $z=yf(x)$; (b) minimum conditional risk for different values of $\alpha$. \label{Fig:MarginBasedLosses}}
\end{figure}

\section{Results}

\subsection{Relation Between $\alpha$-loss and its Margin Form}

The following proposition shows an important relation between $\alpha$-loss and its margin form in the context of binary classification. For reasons of brevity, we refer the reader to the full version of the paper for the complete proof.

\begin{prop} \label{prop2}
Consider a soft classifier $g$ and let $P_{\hat{Y}|X}$ be the set of beliefs associated to it. If $f(x) = \sigma^{-1}(g(x))$, then, for every $\alpha\in[1,\infty]$,
\begin{equation}
\label{eq:lglf}
    l^{\alpha}(y,P_{\hat{Y}|X=x}) = \tilde{l}^{\alpha}(yf(x)).
\end{equation}
Conversely, if $f$ is a classification function, then the set of beliefs $P_{\hat{Y}|X}$ associated to $g(x) := \sigma(f(x))$ satisfies \eqref{eq:lglf}. In particular, for every $\alpha\in[1,\infty]$,
\begin{equation}
\min_{P_{\hat{Y}|X}} \mathbb{E}_{X,Y}(l^\alpha(Y,P_{\hat{Y}|X})) = \min_f \mathbb{E}_{X,Y}(\tilde{l}^\alpha(Yf(X))).
\end{equation}
\end{prop}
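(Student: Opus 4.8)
The plan is to reduce the identity \eqref{eq:lglf} to one elementary observation about the sigmoid and then promote it to the statement about minima. The crucial fact is that, because the label alphabet is $\{-1,1\}$ and $\sigma(-z) = 1-\sigma(z)$ for every $z\in\overline{\mathbb{R}}$, the belief assigned to the \emph{realized} label can be written uniformly as $P_{\hat{Y}|X=x}(y|x) = \sigma(yf(x))$ whenever $g = \sigma\circ f$. Indeed, $P_{\hat{Y}|X=x}(1|x) = g(x) = \sigma(f(x))$ and $P_{\hat{Y}|X=x}(-1|x) = 1-g(x) = 1-\sigma(f(x)) = \sigma(-f(x))$, and both cases are captured by $\sigma(yf(x))$ since $y\in\{-1,1\}$. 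This single substitution is what makes the margin form appear.

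First I would treat the forward direction: given $g$ and $f = \sigma^{-1}\circ g$ (so $g = \sigma\circ f$), substitute $P_{\hat{Y}|X=x}(y|x) = \sigma(yf(x))$ into each of the three branches of \eqref{def1} and check that the result is precisely the corresponding branch of \eqref{mar} evaluated at $z = yf(x)$: for $\alpha=1$, $-\log P_{\hat{Y}|X=x}(y|x) = -\log\sigma(yf(x))$; for $\alpha\in(1,\infty)$, $\frac{\alpha}{\alpha-1}\bigl(1-P_{\hat{Y}|X=x}(y|x)^{1-1/\alpha}\bigr) = \frac{\alpha}{\alpha-1}\bigl(1-\sigma(yf(x))^{1-1/\alpha}\bigr)$; and for $\alpha=\infty$, $1-P_{\hat{Y}|X=x}(y|x) = 1-\sigma(yf(x))$. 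The converse direction is the same computation read in the other order: start from $f$, set $g = \sigma\circ f$, and note the beliefs associated to $g$ again satisfy $P_{\hat{Y}|X=x}(y|x) = \sigma(yf(x))$, so \eqref{eq:lglf} holds.

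For the "in particular" claim I would use that $\sigma:\overline{\mathbb{R}}\to[0,1]$ is a bijection with inverse $\sigma^{-1}$, so $g\mapsto\sigma^{-1}\circ g$ and $f\mapsto\sigma\circ f$ are mutually inverse bijections between the soft classifiers $\mathcal{X}\to[0,1]$ (equivalently, the sets of beliefs $P_{\hat{Y}|X}$) and the classification functions $\mathcal{X}\to\overline{\mathbb{R}}$; since $\sigma$ and $\sigma^{-1}$ are Borel measurable, measurability is preserved in both directions. Taking $\mathbb{E}_{X,Y}$ of both sides of \eqref{eq:lglf} gives $R_{l^{\alpha}}(g) = R_{\tilde{l}^{\alpha}}(f)$ whenever $g$ and $f$ correspond, so the two families of risks have the same range, hence the same infimum, with attainment transferring through the bijection. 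The argument is essentially bookkeeping; the only points requiring a little care are writing the realized-label belief uniformly as $\sigma(yf(x))$ and checking the correspondence is a genuine bijection at the boundary, where $\sigma^{-1}(0)=-\infty$ and $\sigma^{-1}(1)=+\infty$ so nothing is lost at the extremes. I do not expect a substantive obstacle beyond this.
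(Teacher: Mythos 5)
Your proposal is correct and follows essentially the same route as the paper: both rest on the identity $1-\sigma(z)=\sigma(-z)$ to equate the belief assigned to the realized label with $\sigma(yf(x))$ (the paper does this by case-splitting on $y=\pm 1$, you package it as a single uniform formula), then verify the three branches of \eqref{def1} against \eqref{mar} and transfer the pointwise equality to the risks. Your explicit bijection argument for the equality of minima is a slightly cleaner rendering of what the paper asserts in one line, so there is nothing substantive to flag.
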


{This proposition unifies the probabilistic and margin settings. It also illustrates that the choice of the sigmoid function as the "change of variable" between soft classifiers and classification functions is sensible as the values of the minimization are the same. Furthermore, the minimizers are one-to-one by construction.}

% In short, Proposition \ref{prop2} proves that minimizing the $\alpha$-loss over the set of soft classifiers is equivalent to minimizing the margin-based $\alpha$-loss over the set of classification functions. WHAT IS THE VALUE OF THIS COMMENT?? AND THIS SIMILARITY: Both the soft classifier and classification functions approaches are the same, i.e., the margin based alpha-loss we propose is the right one in terms of matching your original alpha-loss.

%\begin{exmp} 
%Consider the commonly used probabilistic loss function, log-loss, $l_{\text{log}}(y,P_{\hat{Y}|X}) = -\log{P_{\hat{Y}|X}(y|x)}$ where $P_{\hat{Y}|X}$ is an estimated posterior distribution over the labels. We substitute \eqref{sigmoid} into log-loss to obtain the margin-based logistic loss, 
%\begin{equation} l_{\text{logistic}}(yf(x)) = \log{(1 + e^{-yf(x)})}. \end{equation} 
%\end{exmp}

\subsection{Statistical Guarantees}

Now we establish some statistical properties of the margin-based $\alpha$-loss that guarantee its appropriateness for classification tasks.

\begin{theorem}
\label{result1}
For every $\alpha\in[1,\infty]$, the margin-based $\alpha$-loss $\tilde{l}^{\alpha}$ is classification-calibrated. In addition, its optimal classification function is given by
\begin{equation}
\label{optimalclassifier}
    f^{*}(\alpha,\eta) = \alpha \cdot \sigma^{-1}(\eta).
\end{equation}
Furthermore, its minimum conditional risk is given by
\begin{equation}
C_{\tilde{l}^{\alpha}}(\eta,f^{*}) =
\begin{cases} 
    -\eta \log{\eta} - (1-\eta) \log{1 - \eta} & \alpha = 1, \\
    \frac{\alpha}{\alpha - 1} \left[1-Q(\eta)-Q(1-\eta)\right] & \alpha \in (1,+\infty), \\
    \min\{\eta, 1- \eta \} & \alpha \rightarrow +\infty,
\end{cases}
\end{equation}
where $\displaystyle Q(z) = \left(\frac{z^{\alpha + 1 - 1/\alpha}}{z^{\alpha} + (1-z)^{\alpha}}\right)^{1 - 1/\alpha}$.
%For $\alpha = 1$ the optimal classifier of $\tilde{l}^{\alpha}$ is \begin{equation} f^{*}(1,\eta) = \ln{\Big(\dfrac{\eta}{1-\eta}\Big)} \end{equation} and the associated minimum conditional risk is \begin{equation} C_{\tilde{l}^{1}}(\eta,f^{*}) = -\eta \ln{\eta} - (1-\eta) \ln{1 - \eta}. \end{equation} For $\alpha \in (1,+\infty)$ the optimal classifier of $\tilde{l}^{\alpha}$ is \begin{equation} f^{*}(\alpha,\eta) = \alpha \ln{\Big(\dfrac{\eta}{1-\eta}\Big)},\end{equation} and the associated minimum conditional risk is \begin{equation} C_{\tilde{l}^{\alpha}}(\eta,f^{*}) = \dfrac{\alpha}{\alpha - 1}\Big[1 - \Big(\dfrac{\eta^{\alpha + 1} - (1-\eta)^{\alpha + 1}}{\eta^{\alpha} + (1-\eta)^{\alpha}} \Big)\Big]. \end{equation} 
\end{theorem}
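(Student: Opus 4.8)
The plan is to reduce the theorem to a one-dimensional optimization of the conditional risk and exploit the change of variables $u = \sigma(f)$. Fix $x$, abbreviate $\eta = \eta(x)$, and recall that $\sigma(-z) = 1 - \sigma(z)$. For $\alpha \in (1,\infty)$ the conditional risk $\eta\,\tilde{l}^{\alpha}(f) + (1-\eta)\,\tilde{l}^{\alpha}(-f)$ becomes, writing $u = \sigma(f) \in [0,1]$,
\[
C(u) = \frac{\alpha}{\alpha-1}\Bigl(1 - \eta\, u^{1-1/\alpha} - (1-\eta)(1-u)^{1-1/\alpha}\Bigr),
\]
so minimizing $C$ over $f \in \overline{\mathbb{R}}$ is equivalent to maximizing $\phi(u) := \eta\, u^{1-1/\alpha} + (1-\eta)(1-u)^{1-1/\alpha}$ over $u \in [0,1]$. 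I would first record that $u = \sigma(f)$ is a continuous increasing bijection of $\overline{\mathbb{R}}$ onto $[0,1]$ with $\sigma(0) = 1/2$, so that infima and attained minimizers transfer between the two parametrizations, including the endpoints $f = \pm\infty$.

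Next I would analyze $\phi$. Since $0 < 1 - 1/\alpha < 1$, both summands of $\phi$ are strictly concave on $[0,1]$, hence $\phi$ is strictly concave and $C$ is strictly convex; in particular $C$ has a unique minimizer. Solving $\phi'(u) = 0$ gives $\eta\, u^{-1/\alpha} = (1-\eta)(1-u)^{-1/\alpha}$, i.e. $u/(1-u) = (\eta/(1-\eta))^{\alpha}$, so the maximizer is $u^{*} = \eta^{\alpha}/(\eta^{\alpha} + (1-\eta)^{\alpha})$ and the optimal classification function is $f^{*} = \sigma^{-1}(u^{*}) = \alpha\log(\eta/(1-\eta)) = \alpha\,\sigma^{-1}(\eta)$, which is \eqref{optimalclassifier}. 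To get classification-calibration from the definition \eqref{cc}, note that (for $\eta > 1/2$, the case $\eta < 1/2$ being symmetric) the set $\{f : f(2\eta-1) \le 0\}$ corresponds to $\{u \le 1/2\}$, while $u^{*} > 1/2$; strict convexity of $C$ makes it strictly decreasing on $[0,u^{*}]$, so $\inf_{u \le 1/2} C = C(1/2) > C(u^{*}) = \inf_u C$, which is exactly \eqref{cc}. Substituting $u^{*}$ back into $C$ and simplifying --- using $\eta(u^{*})^{1-1/\alpha} + (1-\eta)(1-u^{*})^{1-1/\alpha} = (\eta^{\alpha} + (1-\eta)^{\alpha})^{1/\alpha}$ --- produces the claimed value of $C_{\tilde{l}^{\alpha}}(\eta,f^{*})$ after rewriting it through $Q$.

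Finally I would treat the endpoints $\alpha = 1$ and $\alpha = \infty$ directly. For $\alpha = 1$, $\tilde{l}^{1}$ is the logistic loss, which is convex and differentiable with derivative $-1/2 < 0$ at the origin, hence classification-calibrated by the standard criterion for convex margin losses; minimizing $C(u) = -\eta\log u - (1-\eta)\log(1-u)$ gives $u^{*} = \eta$, so $f^{*} = \sigma^{-1}(\eta)$ and the minimum conditional risk is the binary entropy. For $\alpha = \infty$, $C(u) = \eta + (1-2\eta)u$ is affine and monotone in $u$, so over $\overline{\mathbb{R}}$ its minimum is attained at $f^{*} = +\infty$ or $-\infty$ according to the sign of $2\eta - 1$ (consistent with $f^{*} = \infty\cdot\sigma^{-1}(\eta)$), with value $\min\{\eta, 1-\eta\}$, and $\inf_{f(2\eta-1)\le 0} C = 1/2 > \min\{\eta,1-\eta\}$, giving \eqref{cc}; alternatively both formulas follow by letting $\alpha \to 1^{+}$ and $\alpha \to \infty$ in the $\alpha \in (1,\infty)$ expressions and using continuity in $\alpha$. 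The hard part is not any single step but the care needed at the boundary: verifying \eqref{cc} for $\alpha = \infty$, where the unconstrained optimum is an infinite margin and must be checked against the definition directly rather than through a finite unique minimizer, and carrying out the algebra that recasts the clean expression $\frac{\alpha}{\alpha-1}\bigl(1 - (\eta^{\alpha} + (1-\eta)^{\alpha})^{1/\alpha}\bigr)$ into the $Q$-form stated in the theorem.
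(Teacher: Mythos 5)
Your proposal is correct and, at its core, follows the same route as the paper: solve the first-order condition of the conditional risk for $\alpha\in(1,\infty)$, obtain $f^*=\alpha\log\bigl(\eta/(1-\eta)\bigr)$, deduce calibration from the fact that $f^*$ has the sign of $2\eta-1$ and hence lies outside the constrained set, and treat $\alpha=1$ and $\alpha=\infty$ as the logistic and sigmoid endpoints. The one genuine difference is your change of variables $u=\sigma(f)$: the paper certifies that the stationary point is the global maximizer of $g(\eta,\alpha,\cdot)$ by comparing its value against the boundary values $\eta$ and $1-\eta$ attained as $f\to\pm\infty$, whereas your observation that $\phi(u)=\eta u^{1-1/\alpha}+(1-\eta)(1-u)^{1-1/\alpha}$ is strictly concave (since $0<1-1/\alpha<1$) yields uniqueness of the minimizer and strict monotonicity of the conditional risk on $[0,u^*]$ for free, which is exactly what makes the strict inequality in \eqref{cc} airtight; you also verify \eqref{cc} for $\alpha=\infty$ directly rather than by citation. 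Both routes are fine; yours is slightly more self-contained.

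One warning about the step you defer to the end. Your closed form $\frac{\alpha}{\alpha-1}\bigl(1-(\eta^{\alpha}+(1-\eta)^{\alpha})^{1/\alpha}\bigr)$ is the correct minimum conditional risk, but it cannot be ``recast'' into the $Q$-form printed in the theorem, because that form appears to contain a typo: at $\alpha=2$, $\eta=1/2$ it evaluates to a negative number, which is impossible for a nonnegative loss. Writing $Q(z)=z\,\bigl(u^*(z)\bigr)^{1-1/\alpha}$ with $u^*(z)=z^{\alpha}/(z^{\alpha}+(1-z)^{\alpha})$ shows that the exponent in the numerator of $Q$ should be $\alpha+\frac{\alpha}{\alpha-1}=\frac{\alpha^{2}}{\alpha-1}$ rather than $\alpha+1-\frac{1}{\alpha}$; with that correction one gets $Q(\eta)+Q(1-\eta)=(\eta^{\alpha}+(1-\eta)^{\alpha})^{1/\alpha}$ and your expression agrees with the theorem's. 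So do not spend effort trying to reproduce the formula exactly as printed; your derivation is the correct one.
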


\begin{proof}
If $\alpha = 1$, then $\tilde{l}^{\alpha}$ becomes logistic loss which is classification-calibrated, as is shown in \cite{bartlett2006convexity}. Its optimal classifier and minimum conditional risk are given in \cite{masnadi2009design}. If $\alpha = +\infty$, then $\tilde{l}^{\alpha}$ becomes sigmoid loss which is known to be classification-calibrated \cite{bartlett2006convexity}. It can be verified that the optimal classifier for sigmoid loss is degenerated, i.e., \begin{equation} f^{*}(+\infty,\eta) = \begin{cases} +\infty & \eta > 1/2, \\ -\infty & \eta < 1/2, \\ \end{cases} \end{equation} and $C^{*}_{\tilde{l}^{\infty}} = \min\{\eta, 1-\eta\}.$

Let $\alpha\in(1,+\infty)$. By definition of classification-calibration, we have to show that, for every $\eta \neq 1/2$,
\begin{equation} \label{plug}
%\begin{split}
%&\inf\limits_{f:f(2\eta - 1) \leq 0}(\eta\tilde{l}^{\alpha}(f) + (1 - \eta)\tilde{l}^{\alpha}(-f)) \\
%& \quad \quad \quad \quad \quad \quad > \inf\limits_{f\in\mathbb{R}}(\eta\tilde{l}^{\alpha}(f) + (1 - \eta)\tilde{l}^{\alpha}(-f)).
%\end{split}
\inf_{f:f(2\eta - 1) \leq 0}(\eta \tilde{l}(f) + (1 - \eta)\tilde{l}(-f)) > \inf_{f\in\mathbb{R}}(\eta \tilde{l}(f) + (1 - \eta)\tilde{l}(-f)).
\end{equation}
First we assume that $\eta > 1/2$. In this case, the strategy of proof is to show that the optimization in the right-hand-side of \eqref{plug} has a unique minimizer $f^*$ and that $f^*>0$, which means that the right-hand-side of \eqref{plug} is strictly smaller than the left-hand-side. Indeed, with some straightforward algebra, we can show that
$
    f^{*} = \alpha \log{\Big(\dfrac{\eta}{1-\eta}\Big)},
$
which trivially implies that $f^{*} > 0$. The value of $C^{*}_{\tilde{l}^{\alpha}}$ can be obtained by substituting $f^*$ in \eqref{eq:DefConditionalRisk}. The case $\eta < 1/2$ can be proved {\it mutatis mutandis}.
%Since \eqref{attainer} is the attainer, it is the optimal classifier for $\tilde{l}^{\alpha}$ where $\alpha \in (1,+\infty)$. 
%Finally, $C_{\tilde{l}^{\alpha}}(\eta,f^{*})$ is obtained by substituting \eqref{attainer} into \eqref{minrisk}.
%Finally, $C_{\tilde{l}^{\alpha}}(\eta,f^{*})$ is obtained by appropriate substitutions.
\end{proof}

\begin{prop}
The margin-based $\alpha$-loss $\tilde{l}^{\alpha}:\overline{\mathbb{R}}\to\mathbb{R}_+$ is convex for $\alpha = 1$ and quasi-convex for $\alpha > 1$. Furthermore, for every $\alpha\in[1,\infty]$, the minimum conditional risk $C_{\tilde{l}^{\alpha}}(\eta,f^{*})$ is concave as a function $\eta$.
\end{prop}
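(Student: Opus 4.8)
The plan is to treat the two assertions separately, in each case reducing the claim to an elementary monotonicity or envelope argument rather than differentiating the explicit closed forms.

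\textbf{Quasi-convexity of $\tilde l^\alpha$ (and convexity at $\alpha=1$).} First I would record a single differentiation that covers all $\alpha$ at once: from \eqref{mar} and $\sigma'(z)=\sigma(z)(1-\sigma(z))$ one gets, for every finite $z$,
\[
(\tilde l^\alpha)'(z) = -\,\sigma(z)^{1-1/\alpha}\bigl(1-\sigma(z)\bigr),
\]
where the exponent $1-1/\alpha\in[0,1]$ specializes correctly to the $\alpha=1$ case ($-(1-\sigma(z))$) and the $\alpha=\infty$ case ($-\sigma'(z)$). Since $\sigma(z)\in(0,1)$, this is strictly negative, so $\tilde l^\alpha$ is strictly decreasing on $\mathbb R$ and, by continuity, on $\overline{\mathbb R}$. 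A monotone function on an interval has sublevel sets that are themselves intervals, hence is quasi-convex; that settles $\alpha>1$ (and in fact, as the curvature of $\tilde l^\alpha$ changes sign for $\alpha>1$, it is quasi-convex but not convex, which is consistent with the statement). For $\alpha=1$ I would then compute the second derivative, $(\tilde l^1)''(z)=\sigma(z)(1-\sigma(z))\ge 0$, giving genuine convexity (this is just the logistic loss $\log(1+e^{-z})$).

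\textbf{Concavity of the minimum conditional risk.} Here the key move is to avoid the explicit formula with $Q$ entirely and instead use that, by Theorem~\ref{result1} together with the definition of the conditional risk in \eqref{eq:DefConditionalRisk}, the quantity $C_{\tilde l^\alpha}(\eta,f^*)$ is the lower envelope
\[
C_{\tilde l^\alpha}(\eta,f^*) \;=\; \inf_{f\in\overline{\mathbb R}}\Bigl(\eta\,\tilde l^\alpha(f) + (1-\eta)\,\tilde l^\alpha(-f)\Bigr).
\]
For each fixed $f$, the map $\eta\mapsto \eta\,\tilde l^\alpha(f)+(1-\eta)\,\tilde l^\alpha(-f)=\tilde l^\alpha(-f)+\eta\bigl(\tilde l^\alpha(f)-\tilde l^\alpha(-f)\bigr)$ is affine in $\eta$, and a pointwise infimum of affine functions is concave. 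Hence $C_{\tilde l^\alpha}(\cdot,f^*)$ is concave on $[0,1]$, uniformly in $\alpha\in[1,\infty]$; in the $\alpha=\infty$ case the infimum is a limit rather than a minimum, but the identification with $\min\{\eta,1-\eta\}$ (the lower envelope of two affine maps, also read off from Theorem~\ref{result1}) keeps the argument intact.

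\textbf{Main obstacle.} Neither part is genuinely hard once framed as above; the only real pitfall is the temptation to prove concavity by twice-differentiating $\frac{\alpha}{\alpha-1}[1-Q(\eta)-Q(1-\eta)]$, which is messy and unenlightening. The one technical point deserving a sentence is boundary behavior: $\tilde l^1(z)\to+\infty$ as $z\to-\infty$, so convexity of $\tilde l^1$ on $\overline{\mathbb R}$ should be read in the extended-real-valued sense, and one checks $C_{\tilde l^\alpha}(0,f^*)=C_{\tilde l^\alpha}(1,f^*)=0$ so that concavity holds on the closed interval $[0,1]$.
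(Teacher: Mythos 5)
Your proposal is correct. For the quasi-convexity claim you take essentially the paper's route: you show $\tilde l^\alpha$ is strictly decreasing and invoke the fact that monotone functions are quasi-convex, and for $\alpha=1$ you verify convexity via the second derivative; your unified derivative formula $(\tilde l^\alpha)'(z)=-\sigma(z)^{1-1/\alpha}(1-\sigma(z))$ agrees with the paper's expression $-(e^{-z}+1)^{1/\alpha}e^{z}/(1+e^{z})^{2}$ and has the advantage of specializing cleanly to the endpoints $\alpha=1,\infty$. (The paper additionally exhibits a point where the second derivative is negative to conclude non-convexity for $\alpha>1$; you assert this only parenthetically, but the proposition as stated does not require it.)

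Where you genuinely diverge is the concavity of the minimum conditional risk. The paper differentiates the explicit closed forms twice: it computes $\frac{d^2}{d\eta^2}C_{\tilde l^{1}}(\eta,f^*)=1/((\eta-1)\eta)<0$ for $\alpha=1$, appeals to a "cumbersome expression" for $\alpha\in(1,\infty)$, and checks $\min\{\eta,1-\eta\}$ directly for $\alpha=\infty$. You instead observe that $C_{\tilde l^\alpha}(\eta,f^*)=\inf_{f}\bigl(\eta\,\tilde l^\alpha(f)+(1-\eta)\,\tilde l^\alpha(-f)\bigr)$ is a pointwise infimum of functions affine in $\eta$, hence concave, uniformly in $\alpha\in[1,\infty]$. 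This is the cleaner and more robust argument (it is the standard one in the classification-calibration literature, e.g.\ Bartlett \emph{et al.}): it avoids the $Q(\eta)$ formula entirely, treats all three regimes of $\alpha$ at once, and does not depend on the infimum being attained at a finite point, which matters precisely in the degenerate $\alpha=\infty$ case. The only thing the paper's computation buys that yours does not is an explicit verification of the closed-form expression for the second derivative, which is not needed for the proposition. Your remarks on boundary behavior ($C_{\tilde l^\alpha}(0,f^*)=C_{\tilde l^\alpha}(1,f^*)=0$ and extended-real-valued convexity of $\tilde l^1$) are correct and slightly more careful than the paper.
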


\begin{proof}
Since $\tilde{l}^{1}$ is logistic loss, it is convex with respect to the margin as can be seen by observing its second derivative. For $\alpha > 1$, it can be shown that $\tilde{l}^{\alpha}$ is monotone, so it is quasi-convex. However, $\tilde{l}^{\alpha}$ is not convex for $\alpha > 1$ since its second derivative is negative for negative values of the margin. Similarly, using a second-derivative argument it can be shown that $C_{\tilde{l}^{\alpha}}(\eta,f^{*})$ is concave for every $\alpha \in [1,+\infty]$.
\end{proof}

Many commonly used loss functions in binary classification are convex. Despite the advantages of convex losses in terms of numerical optimization, non-convex loss functions can provide practical benefits as well. For instance, Mei \textit{et al.} \cite{mei2018landscape} state that non-convex loss functions ``demonstrate superior robustness and classification accuracy in contrast to convex loss functions''. In essence, non-convex loss functions assign less weight to misclassified training examples and therefore algorithms using such losses are less perturbed by outliers. The desirability of non-convex losses is further evidenced by other empirical studies, see, for example, \cite{wu2007robust,chapelle2009tighter,nguyen2013algorithms}. 

Another perspective on the convexity of loss functions is presented in \cite{masnadi2009design} where the authors argue that, for classification tasks, the convexity of a margin-based loss function is non-essential, as long as its minimum conditional risk is concave as a function of $\eta$. With regards to $\alpha$-loss, this is amply observed in Figure 1(b).
Since the margin-based $\alpha$-loss is classification-calibrated and its minimum conditional risk is concave as a function of $\eta$, it is a reasonable loss function for binary classification problems.
%%%%%%%%
\subsection{Empirical Landscape of $\alpha$-loss under Logistic Regression} 

In this section we consider a setting in which logistic regression is used to perform binary classification. Namely, for a given $\Theta\subset\mathbb{R}^d$, the family of soft classifiers under consideration has the form 
\begin{equation}
g_{\theta}(x) = \sigma(\theta \cdot x),
\end{equation}
where $\theta\in\Theta$ and $\sigma$ is the sigmoid function given in \eqref{eq:DefSigmoid}. This in turn results in $\alpha$-loss taking the form
\begin{align}
\label{emp}
\nonumber l^{\alpha}(y,g_{\theta}(x)) &= \frac{\alpha}{\alpha - 1} \Big[1 - \frac{1 + y}{2}g_{\theta}(x)^{1 - 1/\alpha}\\
& \quad \quad \quad \quad \quad - \frac{1 - y}{2}(1 - g_{\theta}(x))^{1 - 1/\alpha}\Big].
\end{align} 
A straightforward computation shows that
\begin{align}
    \nonumber \frac{\partial}{\partial \theta_{i}} l^{\alpha}(y,g_{\theta}(x)) &= \Big[\frac{1-y}{2}g_\theta(x)(1-g_\theta(x))^{1-1/\alpha}\\
    \label{der} & \quad \quad \quad - \frac{1+y}{2} g_\theta(x)^{1-1/\alpha}(1-g_\theta(x))\Big]x_i,
\end{align}
where $\theta = (\theta_1,\ldots,\theta_d)$ and $x = (x_1,\ldots,x_d)$. Hence,
\begin{equation} \label{1stderiv}
\nabla_\theta l^{\alpha}(Y,g_\theta(X)) = F_{1}(\alpha,\theta,X,Y)X,
\end{equation}
where $F_{1}(\alpha,\theta,x,y)$ is the expression within brackets in \eqref{der}.

Recently, Mei \textit{et al.}~\cite{mei2018landscape} prove that for non-convex loss functions satisfying certain regularity conditions, there exists a bijection between the critical points of the empirical risk and the critical points of true risk such that the distance between corresponding points decreases at a rate $O\left(\sqrt{\log{n}/n}\right)$, where $n$ is the sample size. Building upon their work, we establish generalization bounds for logistic regression under $\alpha$-loss. 
%\par
%We consider a $\hat{\theta}$ which we assume is a local minimum of the true risk, $R(h_{(\cdot)})$. That is, in a neighborhood of $\hat{\theta}$, $R_{l^{\alpha}}(h_{\hat{\theta}})$ minimizes $R_{l^{\alpha}}(h_{(\cdot)})$. We let $\hat{\theta}_{n}$ denote a local minimizer of the empirical risk, $\hat{R}_{l^{\alpha}}(S_{n},h_{(\cdot)})$.

\begin{theorem}
\label{thm:Landscape}
Let $B_d(r)$ denote the ball of radius $r$ in $d$-dimensional Euclidean space. Assume that, for some $r>0$, $X$ is supported over $B_d(r)$ and $\theta \in \Theta \subset B_d(r)$. For each $y\in\{-1,1\}$, 
let $X^{[y]}$ be a random variable having the distribution of $X$ conditioned on $Y=y$. We further assume that $X^{[1]} \stackrel{\textnormal{d}}{=} -X^{[-1]}$, $\mathbb{E}[X^{[1]}]\neq0$, and $1-\sigma(-r^2)^2 < \frac{\|\mathbb{E}(X^{[1]})\|}{\mathbb{E}(\|X^{[1]}\|)}$. Let $\hat{\theta}_{n}$ denote a local minimizer of the empirical risk function $\theta \mapsto \hat{R}_{l^{\alpha}}(g_{\theta})$. If the sample size $n$ is large enough, then, with probability at least $1-\delta$,
%there exists a critical point $\hat{\theta}$ of the true risk function $\theta \mapsto R_{l^{\alpha}}(g_{\theta})$ such that,
%\begin{equation}
%\label{land}
%    |R_{l^{\alpha}}(g_{\hat{\theta}}) - \hat{R}_{l^{\alpha}}(g_{\hat{\theta}_{n}})| \leq C_{\alpha}\sqrt{\frac{\log(n)}{n}} + \frac{\alpha}{\alpha-1}\sqrt{\frac{\log(4m/\delta)}{2n}},
%\end{equation}
%where $C_\alpha$ is a constant of $\alpha$ and $m$ is the number of critical points of the expected risk.
\begin{equation}
\label{land}
    |R_{l^{\alpha}}(g_{\hat{\theta}_{n}}) - \hat{R}_{l^{\alpha}}(g_{\hat{\theta}_{n}})| \leq C_\alpha \left(\sqrt{\frac{\log(n)}{n}} + 
    \sqrt{\frac{\log(4m/\delta)}{2n}}\right),
    %\frac{\alpha}{\alpha-1}\sqrt{\frac{\log(4m/\delta)}{2n}},
\end{equation}
where $C_\alpha$ is a constant independent of $n$ and $m$ is the number of critical points.
%$C_\alpha$ is a constant of $\alpha$ and $m$ is the number of critical points of the expected risk, $D(\alpha,m,\delta) = \frac{\alpha}{\alpha-1}\sqrt{\frac{\log(4m/\delta)}{2}}$, and $B = \rho C$.
\end{theorem}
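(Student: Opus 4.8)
The plan is to combine the landscape result of Mei \emph{et al.}~\cite{mei2018landscape}, which couples the critical points of the empirical and population risks, with an elementary concentration bound evaluated at the \emph{deterministic} population critical points, and then to transfer that bound back to the random $\hat{\theta}_{n}$ using the Lipschitzianity of $l^{\alpha}$ in $\theta$. The two terms on the right-hand side of \eqref{land} will come, respectively, from the $O(\sqrt{\log n/n})$ critical-point displacement of \cite{mei2018landscape} and from a Hoeffding bound together with a union bound over the $m$ population critical points.

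First I would verify that the empirical risk $\theta\mapsto\hat{R}_{l^{\alpha}}(g_{\theta})$ satisfies the regularity hypotheses of \cite{mei2018landscape}. On the compact set $\Theta\subset B_d(r)$ with $X$ supported on $B_d(r)$ we have $|\theta\cdot x|\le r^2$, so $g_{\theta}(x)=\sigma(\theta\cdot x)\in[\sigma(-r^2),\sigma(r^2)]$ is bounded away from $0$ and $1$; consequently the loss in \eqref{emp}, its gradient $F_{1}(\alpha,\theta,x,y)x$ from \eqref{der}--\eqref{1stderiv}, and (by a further differentiation) its Hessian are all uniformly bounded on the domain and Lipschitz in $\theta$, with constants depending only on $\alpha$ and $r$; in particular all sub-Gaussian/sub-exponential moment conditions of \cite{mei2018landscape} hold trivially. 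The genuinely new step is to check that the population risk $\theta\mapsto R_{l^{\alpha}}(g_{\theta})$ is non-degenerate in the sense required by \cite{mei2018landscape}, i.e., that its Hessian is nonsingular wherever its gradient vanishes: this is exactly where the hypotheses $X^{[1]}\stackrel{\textnormal{d}}{=}-X^{[-1]}$, $\mathbb{E}[X^{[1]}]\neq0$ and $1-\sigma(-r^2)^2<\|\mathbb{E}(X^{[1]})\|/\mathbb{E}(\|X^{[1]}\|)$ enter: the distributional symmetry collapses the population gradient $\mathbb{E}[F_{1}(\alpha,\theta,X,Y)X]$ into a tractable form, and the quantitative inequality is precisely what keeps the population Hessian at critical points bounded away from singularity. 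Non-degeneracy also forces the critical points to be isolated, hence (by compactness of $\Theta$) finite in number; this is the $m$ appearing in \eqref{land}.

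With the hypotheses verified, \cite{mei2018landscape} gives, for $n$ large enough and with probability at least $1-\delta/2$, a bijection between the critical points of $\hat{R}_{l^{\alpha}}(g_{\cdot})$ and those of $R_{l^{\alpha}}(g_{\cdot})$ with matched points at distance $O(\sqrt{\log n/n})$; in particular the local minimizer $\hat{\theta}_{n}$ is matched to a population critical point $\theta_{*}$ with $\|\hat{\theta}_{n}-\theta_{*}\|\le C'\sqrt{\log n/n}$. I would then write
\[
|R_{l^{\alpha}}(g_{\hat{\theta}_{n}})-\hat{R}_{l^{\alpha}}(g_{\hat{\theta}_{n}})|\le|R_{l^{\alpha}}(g_{\hat{\theta}_{n}})-R_{l^{\alpha}}(g_{\theta_{*}})|+|R_{l^{\alpha}}(g_{\theta_{*}})-\hat{R}_{l^{\alpha}}(g_{\theta_{*}})|+|\hat{R}_{l^{\alpha}}(g_{\theta_{*}})-\hat{R}_{l^{\alpha}}(g_{\hat{\theta}_{n}})|.
\]
Since $\|\nabla_\theta l^{\alpha}(y,g_{\theta}(x))\|=|F_{1}(\alpha,\theta,x,y)|\,\|x\|\le L$ on the domain (from the first step), both $R_{l^{\alpha}}(g_{\cdot})$ and $\hat{R}_{l^{\alpha}}(g_{\cdot})$ are $L$-Lipschitz in $\theta$, so the first and third terms are each at most $L\|\hat{\theta}_{n}-\theta_{*}\|\le LC'\sqrt{\log n/n}$. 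For the middle term $\theta_{*}$ is deterministic, so $\hat{R}_{l^{\alpha}}(g_{\theta_{*}})=\frac{1}{n}\sum_{i}l^{\alpha}(Y_{i},g_{\theta_{*}}(X_{i}))$ is an average of $n$ i.i.d.\ random variables lying in an interval of length $B=B(\alpha,r)$, so Hoeffding's inequality gives $|R_{l^{\alpha}}(g_{\theta_{*}})-\hat{R}_{l^{\alpha}}(g_{\theta_{*}})|\le B\sqrt{\log(4m/\delta)/(2n)}$ with probability at least $1-\delta/(2m)$, and a union bound over the $m$ population critical points makes this hold simultaneously at all of them — in particular at the random $\theta_{*}$ — with probability at least $1-\delta/2$.

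Intersecting the two events of probability $1-\delta/2$, with probability at least $1-\delta$ we obtain
\[
|R_{l^{\alpha}}(g_{\hat{\theta}_{n}})-\hat{R}_{l^{\alpha}}(g_{\hat{\theta}_{n}})|\le2LC'\sqrt{\frac{\log n}{n}}+B\sqrt{\frac{\log(4m/\delta)}{2n}}\le C_\alpha\left(\sqrt{\frac{\log n}{n}}+\sqrt{\frac{\log(4m/\delta)}{2n}}\right)
\]
with $C_\alpha=\max\{2LC',B\}$, a constant depending on $\alpha$ (and $r,d$) but not on $n$ or $m$, which is \eqref{land}. The main obstacle is the non-degeneracy verification: because $\tilde{l}^{\alpha}$ (equivalently $l^{\alpha}$ under logistic regression) is non-convex for $\alpha>1$, convexity cannot be invoked, and one must instead compute the population gradient and Hessian explicitly, use the symmetry $X^{[1]}\stackrel{\textnormal{d}}{=}-X^{[-1]}$ to reduce them, and show that the stated inequality among $r$, $\mathbb{E}[X^{[1]}]$ and $\mathbb{E}[\|X^{[1]}\|]$ is exactly the condition guaranteeing a uniform lower bound on the smallest singular value of the Hessian at critical points. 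That quantitative landscape estimate, rather than the concentration bookkeeping, is the heart of the proof.
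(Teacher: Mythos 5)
Your decomposition and concentration bookkeeping coincide with the paper's proof: the same three-term triangle inequality around the matched population critical point $\theta_*$, the same use of the Lipschitzianity of $\hat{R}_{l^{\alpha}}$ and $R_{l^{\alpha}}$ (via the boundedness of $F_1$ and $X$) to handle the first and third terms, the same Hoeffding-plus-union-bound over the $m$ critical points for the middle term, and the same intersection of two events of probability $1-\delta/2$. The verification of the sub-Gaussian gradient, sub-exponential Hessian, and Lipschitz-Hessian conditions by boundedness on $B_d(r)$ also matches the appendices.

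The gap is in the step you yourself flag as ``the heart of the proof,'' the non-degeneracy (strongly Morse) condition, which you do not carry out and for which you propose the wrong mechanism. You assert that the hypotheses $X^{[1]}\stackrel{\textnormal{d}}{=}-X^{[-1]}$, $\mathbb{E}[X^{[1]}]\neq 0$, and $1-\sigma(-r^2)^2<\|\mathbb{E}(X^{[1]})\|/\mathbb{E}(\|X^{[1]}\|)$ serve to keep the population Hessian nonsingular \emph{at the critical points}. That is not what they do, and proving a uniform lower bound on the smallest eigenvalue of $\nabla_\theta^2 R_{l^{\alpha}}$ at critical points would be a genuinely hard computation that you have not attempted. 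The actual argument is much blunter: using the symmetry one collapses the gradient to
\begin{equation}
\nabla_\theta R_{l^{\alpha}}(g_\theta) = -\,\mathbb{E}\!\left[g_\theta(X^{[1]})^{1-1/\alpha}\,g_\theta(-X^{[1]})\,X^{[1]}\right],
\end{equation}
and since $\sigma(-r^2)^2 \leq g_\theta(X^{[1]})^{1-1/\alpha}g_\theta(-X^{[1]}) \leq 1$ on $B_d(r)\times B_d(r)$, the triangle inequality gives $\|\nabla_\theta R_{l^{\alpha}}(g_\theta)\| \geq \|\mathbb{E}[X^{[1]}]\| - (1-\sigma(-r^2)^2)\,\mathbb{E}[\|X^{[1]}\|] > 0$ uniformly over $\Theta$, where the strict positivity is exactly the stated quantitative inequality. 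The gradient is therefore bounded away from zero, and the strongly Morse implication ``$\|\nabla F\|\leq\epsilon \Rightarrow$ eigenvalue bound'' holds \emph{vacuously}; no control of the Hessian at critical points is needed or obtained. Without this (or some completed substitute), your appeal to the bijection and the $O(\sqrt{\log n/n})$ displacement of Mei \emph{et al.} is unsupported, since that is precisely the hypothesis of their theorem that does not follow from boundedness alone.
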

%\left(C_{\alpha}+B\right)
\begin{proof}[Proof]
%First we show that $l^{\alpha}$ satisfies certain regularity conditions. The proof of which
%The full version of the proof can be found in the extended version of this paper \cite{alpha}.
In Appendices~\ref{Appendix:BackgroundMai} and \ref{Appendix:AssumptionsMai} we show that $l^\alpha$ satisfies the regularity conditions\footnote{These conditions are sub-Gaussian gradient, sub-exponential Hessian, Lipschitz Hessian, and strongly Morse expected risk.} in \cite[Thm.~2]{mei2018landscape} and, as a result, the expected risk has finitely many critical points $\{\theta_1,\ldots,\theta_m\}$ and for $n$ large enough, with probability at least $1-\delta/2$, there exists $\hat{\theta} := \theta_{i}$ for some $i \in [m]$ such that,
\begin{equation}
\label{eq:DistanceCriticalPoints}
    \|\hat{\theta}_{n} - \hat{\theta}\| \leq C\sqrt{\frac{\log(n)}{n}},
\end{equation}
where $C$ is a constant independent of $n$. By the triangle inequality, 
\begin{equation}
|R_{l^{\alpha}}(g_{\hat{\theta}_{n}}) - \hat{R}_{l^{\alpha}}(g_{\hat{\theta}_{n}})| \leq {\rm I} + {\rm II} + {\rm III},
\end{equation}
where ${\rm I} = |R_{l^{\alpha}}(g_{\hat{\theta}_{n}}) - R_{l^{\alpha}}(g_{\hat{\theta}})|$,  ${\rm II} = |R_{l^{\alpha}}(g_{\hat{\theta}}) - \hat{R}_{l^{\alpha}}(g_{\hat{\theta}})|$, and ${\rm III} = |\hat{R}_{l^{\alpha}}(g_{\hat{\theta}}) - \hat{R}_{l^{\alpha}}(g_{\hat{\theta}_{n}})|$.

Observe that,
$
{\rm II} \leq \max_{i=1,\ldots,m} |R_{l^{\alpha}}(g_{\theta_i}) - \hat{R}_{l^{\alpha}}(g_{\theta_i})|.
$
By Hoeffding's inequality and the union bound, see, e.g., \cite[Chapter~4]{shalev2014understanding}, it can be shown that, for any $\epsilon>0$,
\begin{align}
\nonumber &\Pr\left(\max_{i=1,\ldots,m} |R_{l^{\alpha}}(g_{\theta_i}) - \hat{R}_{l^{\alpha}}(g_{\theta_i})| > \epsilon\right)\\
& \quad \quad \quad \quad \quad \quad \quad \quad \quad \leq 2m\exp\left(-\frac{2n(\alpha-1)^2\epsilon^{2}}{\alpha^2}\right). \label{eq:LargeDeviationMax}
\end{align}
By taking $\delta = 4m\exp\left(-2n(\alpha-1)^2\epsilon^{2}/\alpha^2\right)$, we conclude that, with probability at least $1-\delta/2$,
\begin{equation}
\label{eq:BoundII}
    {\rm II} \leq \max_i |R_{l^{\alpha}}(g_{\theta_i}) - \hat{R}_{l^{\alpha}}(g_{\theta_i})| \leq \frac{\alpha}{\alpha-1}\sqrt{\frac{\log(4m/\delta)}{2n}}.
\end{equation}

By the boundedness of $X$ and $\theta$, the derivative in \eqref{1stderiv} is bounded for all $X$ and $\theta$. Therefore, independently of the training dataset, the empirical risk function $\hat{R}_{l^{\alpha}}$ is $C_{\alpha}''$-Lipschitz for some $C_{\alpha}'' \geq 0$. Hence,
\begin{equation}
{\rm III} \leq C_{\alpha}'' \|\hat{\theta}_{n} - \hat{\theta}\|.
\end{equation}
% Finally, in the practical setting of high-dimensional $\Theta$ (as in \cite{mei2016landscape}) $m$ is of the order of $n$, from 
The last inequality and \eqref{eq:DistanceCriticalPoints} imply that
\begin{equation}
\label{eq:BoundIII}
    {\rm III} \leq C_\alpha' \sqrt{\frac{\log(n)}{n}},
\end{equation}
where $C_\alpha' := C C_\alpha''$.

A differentiation under the integral sign argument shows that $R_{l^{\alpha}}$ is also $C_\alpha''$-Lipschitz. Thus,
\begin{equation}
|R_{l^{\alpha}}(g_{\hat{\theta}_{n}}) - R_{l^{\alpha}}(g_{\hat{\theta}})| \leq C_\alpha'' \|\theta_{n} - \theta\|.
\end{equation}
As before, \eqref{eq:DistanceCriticalPoints} leads to
\begin{equation}
\label{eq:BoundI}
{\rm I} = |R_{l^{\alpha}}(g_{\hat{\theta}_{n}}) - R_{l^{\alpha}}(g_{\hat{\theta}})| \leq C_\alpha' \sqrt{\frac{\log(n)}{n}}.
\end{equation}
The result follows from \eqref{eq:BoundII}, \eqref{eq:BoundIII} and \eqref{eq:BoundI}.
\end{proof}

The following corollary follows as a natural addendum to our main results and establishes that an algorithm perfectly trained using the $\alpha$-loss converges, with the number of samples $n$, to an optimal hypothesis w.r.t.~the $0$-$1$ loss.
%We apply our main results to obtain the following corollary.

%{\bf Comment from the old response:}  by the sum of the terms in Theorem 2 and the distance between $R_{l^{\alpha}}$ evaluated at $\hat{\theta}_{n}$ and at the associated $\hat{theta}$. It is easily shown that $R_{l^{\alpha}}$ is Lipschitz (by an analogous argument in the proof of Assumption 3) and therefore we once again apply Mei et al.’s result to obtain a bound O(sqrt(log(n)/n)) for the second term.

\begin{corollary} \label{cor1}
For each $n\in\mathbb{N}$, let $S_{n}$ be a training dataset of size $n$ and $\hat{\theta}_n$ be a global minimizer of the associated empirical risk function $\theta \mapsto \hat{R}_{l^{\alpha}}(g_{\theta})$. Under the assumptions of Theorem~\ref{thm:Landscape}, the sequence $(\hat{\theta}_n)_{n=1}^\infty$ is asymptotically optimal for the $0$-$1$ risk, i.e., almost surely,
\begin{equation}
\lim_{n\to\infty} R(\hat{\theta}_n) = R^*.
\end{equation}
\end{corollary}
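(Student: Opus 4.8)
The plan is to combine the three results already in hand --- classification-calibration of $\tilde{l}^{\alpha}$ (Theorem~\ref{result1}), the soft-classifier/margin equivalence (Proposition~\ref{prop2}), and the critical-point gap bound \eqref{land} (Theorem~\ref{thm:Landscape}) --- and then to invoke Proposition~\ref{prop1}. Throughout, recall that under Proposition~\ref{prop2} the soft classifier $g_{\theta}(x)=\sigma(\theta\cdot x)$ corresponds to the linear classification function $f_{\theta}(x)=\sigma^{-1}(g_{\theta}(x))=\theta\cdot x$, that $R_{l^{\alpha}}(g_{\theta})=R_{\tilde{l}^{\alpha}}(f_{\theta})$, and that $R(\hat{\theta}_{n})$ is the $0$-$1$ risk of $\textnormal{sign}(\hat{\theta}_{n}\cdot x)=\textnormal{sign}(2g_{\hat{\theta}_{n}}(x)-1)$.

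\emph{Step 1 (a.s.\ upgrade of Theorem~\ref{thm:Landscape}).} A global minimizer $\hat{\theta}_{n}$ of $\theta\mapsto\hat{R}_{l^{\alpha}}(g_{\theta})$ over $\Theta$ is in particular a local minimizer, so \eqref{land} applies. Choosing $\delta=\delta_{n}=n^{-2}$ makes $\sum_{n}\delta_{n}<\infty$, while $\log(4m/\delta_{n})=O(\log n)$ keeps the right-hand side of \eqref{land} of order $\sqrt{\log n/n}$; since the ``$n$ large enough'' threshold in \cite{mei2018landscape} grows at most logarithmically in $1/\delta_{n}$, it is met for all large $n$. Borel--Cantelli then yields $|R_{l^{\alpha}}(g_{\hat{\theta}_{n}})-\hat{R}_{l^{\alpha}}(g_{\hat{\theta}_{n}})|\to 0$ almost surely. (One should either take $\Theta$ open or check that the global minimizer is interior, so that it is a genuine critical point; alternatively, since the linear family has finite VC dimension and $l^{\alpha}$ is bounded on $\{|\theta\cdot x|\leq r^{2}\}$, a uniform SLLN over $\Theta$ gives the same conclusion without Theorem~\ref{thm:Landscape}.)

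\emph{Step 2 (identifying the limit of the risks).} Let $R_{l^{\alpha}}^{\star}:=\inf_{\theta\in\Theta}R_{l^{\alpha}}(g_{\theta})$. For any $\epsilon>0$ pick $\theta_{\epsilon}$ with $R_{l^{\alpha}}(g_{\theta_{\epsilon}})\leq R_{l^{\alpha}}^{\star}+\epsilon$; by the SLLN (valid as $l^{\alpha}$ is bounded on the relevant domain), $\hat{R}_{l^{\alpha}}(g_{\theta_{\epsilon}})\to R_{l^{\alpha}}(g_{\theta_{\epsilon}})$ a.s., and global optimality gives $\hat{R}_{l^{\alpha}}(g_{\hat{\theta}_{n}})\leq\hat{R}_{l^{\alpha}}(g_{\theta_{\epsilon}})$, hence $\limsup_{n}\hat{R}_{l^{\alpha}}(g_{\hat{\theta}_{n}})\leq R_{l^{\alpha}}^{\star}+\epsilon$ a.s.\ for every $\epsilon$. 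Since also $R_{l^{\alpha}}(g_{\hat{\theta}_{n}})\geq R_{l^{\alpha}}^{\star}$, combining with Step~1 gives $R_{l^{\alpha}}(g_{\hat{\theta}_{n}})\to R_{l^{\alpha}}^{\star}$ and $\hat{R}_{l^{\alpha}}(g_{\hat{\theta}_{n}})\to R_{l^{\alpha}}^{\star}$ almost surely; equivalently, by Proposition~\ref{prop2}, $R_{\tilde{l}^{\alpha}}(f_{\hat{\theta}_{n}})\to\inf_{\theta}R_{\tilde{l}^{\alpha}}(f_{\theta})$ a.s.

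\emph{Step 3 (reaching the Bayes risk), the main obstacle.} The step needing real work --- and the crux of the argument --- is to show $\inf_{\theta}R_{\tilde{l}^{\alpha}}(f_{\theta})=R_{\tilde{l}^{\alpha}}^{*}:=\min_{f}R_{\tilde{l}^{\alpha}}(f)$, i.e.\ that the logistic family approximates the unrestricted minimizer $f^{*}(x)=\alpha\,\sigma^{-1}(\eta(x))$ from Theorem~\ref{result1} arbitrarily well. This is exactly where the hypotheses of Theorem~\ref{thm:Landscape} enter: the symmetry $X^{[1]}\stackrel{\textnormal{d}}{=}-X^{[-1]}$ makes the problem odd about the origin, and the inequality $1-\sigma(-r^{2})^{2}<\|\mathbb{E}(X^{[1]})\|/\mathbb{E}(\|X^{[1]}\|)$ forces the class-conditional law to be concentrated enough that a single linear direction is asymptotically optimal for $\tilde{l}^{\alpha}$ and realizes the Bayes rule. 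Granting this, $R_{\tilde{l}^{\alpha}}(f_{\hat{\theta}_{n}})\to R_{\tilde{l}^{\alpha}}^{*}$ almost surely. Finally, Proposition~\ref{prop1} holds for every deterministic sequence of measurable functions; applying it pathwise to $(f_{\hat{\theta}_{n}(\omega)})_{n}$ on the probability-one event of the previous sentence, and using classification-calibration of $\tilde{l}^{\alpha}$ (Theorem~\ref{result1}), gives $R(f_{\hat{\theta}_{n}})\to R^{*}$, i.e.\ $\lim_{n}R(\hat{\theta}_{n})=R^{*}$, almost surely. (If one prefers not to lean on the concentration hypothesis for Step~3, the clean alternative is a well-specification assumption $\eta(x)=\sigma(\theta^{\star}\cdot x)$, under which $f^{*}$ is exactly linear and Step~3 is immediate.)
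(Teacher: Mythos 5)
Your Steps 1 and 2 follow essentially the same route as the paper's proof: the paper also writes $R_{l^\alpha}(g_{\hat\theta_n}) - R_{l^\alpha}(g_{\theta^*})$ as the sum of the generalization gap at $\hat\theta_n$ (controlled by \eqref{land} plus Borel--Cantelli, exactly your summable choice of $\delta_n$) and an excess empirical risk term (controlled by global optimality, Hoeffding's inequality, and Borel--Cantelli, which is your SLLN step), then passes to the margin form via Proposition~\ref{prop2} and invokes classification-calibration together with Proposition~\ref{prop1}. Your parenthetical caveat about the global minimizer being an interior critical point is a reasonable technical point that the paper also leaves implicit.

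The divergence is your Step 3, and you are right that it is the crux --- but you should know the paper does not resolve it either. The paper simply \emph{defines} $R_{\tilde l^\alpha}^* := \min_{\theta\in\Theta} R_{\tilde l^\alpha}(f_\theta)$, the minimum over the linear family, and feeds this into Proposition~\ref{prop1}, which as stated requires convergence to $\min_f R_{\tilde l^\alpha}(f)$ over \emph{all} measurable $f$. Unless the restricted and unrestricted minima coincide (e.g., under the well-specification assumption you mention, so that the optimal $f^*(x)=\alpha\,\sigma^{-1}(\eta(x))$ from Theorem~\ref{result1} is linear), there is a genuine gap between what Step 2 delivers and what Proposition~\ref{prop1} needs. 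Your proposed fix --- that the symmetry and concentration hypotheses of Theorem~\ref{thm:Landscape} force the linear family to realize the Bayes-optimal surrogate risk --- is not established and is unlikely to be true as stated: in the paper those hypotheses are used only to verify the strongly Morse condition (Assumption 4 of the Mei et al.\ result), and nothing in them implies $\inf_\theta R_{\tilde l^\alpha}(f_\theta) = \min_f R_{\tilde l^\alpha}(f)$. So your Step 3 as written remains an unproved assertion; the honest conclusions available are either the well-specified version you note parenthetically, or a weaker statement that $R(\hat\theta_n)$ converges to the best $0$-$1$ risk achievable by the induced linear decision rules (which would require a restricted-class analogue of Proposition~\ref{prop1}). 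Credit to you for flagging explicitly the issue the paper glosses over.
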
 The Proof of Corollary 1 is given in Appendix F.

\subsection{Simulation Results}
We perform simulations on a logistic regression model with randomly initialized weights using a portion of the MNIST dataset. In order to have a binary dataset, we partition the MNIST dataset into the images of $1$'s and $7$'s which yields a training set of $12,500$ samples and a test set of $2,050$ samples (evenly divided between the two labels for both train and test data). Of the $12,500$ training samples, we use $11,500$ for training and the remaining $1,000$ for cross-validation.

Since cross entropy (log-loss, i.e. $\alpha = 1$) is the most commonly used loss function for practical implementation in classification \cite{janocha2017loss}, we use it as our benchmark for accuracy. In this way, we compare cross entropy and $\alpha$-loss in terms of accuracy for $\alpha \in \{1.1,1.2,1.5,2.0\}$. In order to have a level playing field, we tune the learning rate during cross-validation, so as to compare the optimal performance of each loss function. 

\begin{table}[h!]
\centering
\vspace{.4cm}
\begin{tabular}{||c c c||} 
 \hline
 $\alpha$ & Learning Rate & Testing Accuracy \\ [1ex] 
 \hline\hline
 1.0 & 1.0 & $\textbf{85.3805\%}$ \\ 
 1.1 & 1.3 & $85.4005\%$ \\
 1.2 & 1.0 & $85.8527\%$\\
% 1.5 & 1.8 & $86.7712\%$\\
 1.5 & 1.9 & $87.3044\%$\\
% 1.5 & 2.7 & $86.9620\%$\\
% 1.5 & 1.9 & $87.3044\%$\\
% 2.0 & 1.4 & $86.2405\%$\\
% 2.0 & 1.6 & $86.5678\%$\\
 2.0 & 2.0 & $\textbf{87.3302\%}$\\
% 2.0 & 2.2 & $87.1561\%$\\[1ex] 
 \hline
\end{tabular}
%\caption{Performance of $\alpha$-loss on Logistic Regression}
\caption{Performance Regression}
\label{table:1}
\end{table}

As shown in Table \ref{table:1}, for the simple logistic regression model under consideration, $\alpha$-loss with $\alpha=2$ exhibits a testing accuracy about $\sim2\%$ higher than cross entropy. While this is a simple model, the performance of $\alpha$-loss is encouraging and suggests that further work is needed.

It ought to be mentioned that, with large-capacity models, MNIST data can be classified with an accuracy above 99\% \cite{MNIST}. The goal of our numerical experiments with low capacity models (such models are desirable when tuning deep neural networks is challenging) is to show that $\alpha$-loss can perform better than cross entropy in some situations. Further simulations using state-of-the-art datasets is the subject of ongoing research.

\section{Concluding Remarks}
We have proved theoretical properties and highlighted practical preliminary results for $\alpha$-loss under binary classification. Beyond generalization to multi-hypothesis testing, the optimal choice of $\alpha$ is another important problem and will require exploring the trade-off between the magnitude of the gradients (convergence) and the gradient noise induced by finite samples. Yet another challenging problem to explore is the robustness of $\alpha$-loss for $\alpha>1$ against adversarial examples; one approach to doing so is by quantifying its generalization properties by building upon the work in \cite{xu2017information}.
\bibliographystyle{IEEEtran}
\bibliography{citations}

% Generated by IEEEtran.bst, version: 1.14 (2015/08/26)
\begin{thebibliography}{10}
\providecommand{\url}[1]{#1}
\csname url@samestyle\endcsname
\providecommand{\newblock}{\relax}
\providecommand{\bibinfo}[2]{#2}
\providecommand{\BIBentrySTDinterwordspacing}{\spaceskip=0pt\relax}
\providecommand{\BIBentryALTinterwordstretchfactor}{4}
\providecommand{\BIBentryALTinterwordspacing}{\spaceskip=\fontdimen2\font plus
\BIBentryALTinterwordstretchfactor\fontdimen3\font minus
  \fontdimen4\font\relax}
\providecommand{\BIBforeignlanguage}[2]{{%
\expandafter\ifx\csname l@#1\endcsname\relax
\typeout{** WARNING: IEEEtran.bst: No hyphenation pattern has been}%
\typeout{** loaded for the language `#1'. Using the pattern for}%
\typeout{** the default language instead.}%
\else
\language=\csname l@#1\endcsname
\fi
#2}}
\providecommand{\BIBdecl}{\relax}
\BIBdecl

\bibitem{lin2004note}
Y.~Lin, ``A note on margin-based loss functions in classification,''
  \emph{Statistical \& Probability Letters}, vol.~68, no.~1, pp. 73--82, 2004.

\bibitem{bartlett2006convexity}
P.~L. Bartlett, M.~I. Jordan, and J.~D. McAuliffe, ``Convexity, classification,
  and risk bounds,'' \emph{Journal of the American Statistical Association},
  vol. 101, no. 473, pp. 138--156, 2006.

\bibitem{nguyen2009}
X.~Nguyen, M.~J. Wainwright, and M.~I. Jordan, ``On surrogate loss functions
  and $f$-divergences,'' \emph{AOS}, vol.~37, no.~2, pp. 876--904, 04 2009.

\bibitem{masnadi2009design}
H.~Masnadi-Shirazi and N.~Vasconcelos, ``On the design of loss functions for
  classification: theory, robustness to outliers, and {S}avage{B}oost,'' in
  \emph{Advances in neural information processing systems}, 2009, pp.
  1049--1056.

\bibitem{mei2018landscape}
S.~Mei, Y.~Bai, A.~Montanari \emph{et~al.}, ``The landscape of empirical risk
  for nonconvex losses,'' \emph{The Annals of Statistics}, vol.~46, no.~6A, pp.
  2747--2774, 2018.

\bibitem{liao2018tunable}
J.~Liao, O.~Kosut, L.~Sankar, and F.~P. Calmon, ``A tunable measure for
  information leakage,'' in \emph{2018 IEEE International Symposium on
  Information Theory (ISIT)}.\hskip 1em plus 0.5em minus 0.4em\relax IEEE,
  2018, pp. 701--705.

\bibitem{janocha2017loss}
K.~Janocha and W.~M. Czarnecki, ``On loss functions for deep neural networks in
  classification,'' \emph{arXiv preprint arXiv:1702.05659}, 2017.

\bibitem{wu2007robust}
Y.~Wu and Y.~Liu, ``Robust truncated hinge loss support vector machines,''
  \emph{Journal of the American Statistical Association}, vol. 102, no. 479,
  pp. 974--983, 2007.

\bibitem{chapelle2009tighter}
O.~Chapelle, C.~B. Do, C.~H. Teo, Q.~V. Le, and A.~J. Smola, ``Tighter bounds
  for structured estimation,'' in \emph{Advances in neural information
  processing systems}, 2009, pp. 281--288.

\bibitem{nguyen2013algorithms}
T.~Nguyen and S.~Sanner, ``Algorithms for direct 0--1 loss optimization in
  binary classification,'' in \emph{International Conference on Machine
  Learning}, 2013, pp. 1085--1093.

\bibitem{shalev2014understanding}
S.~Shalev-Shwartz and S.~Ben-David, \emph{Understanding machine learning: From
  theory to algorithms}.\hskip 1em plus 0.5em minus 0.4em\relax Cambridge
  University Press, 2014.

\bibitem{MNIST}
Y.~LeCun, C.~Cortes, and C.~J.~C. Burges, ``The {MNIST} database of handwritten
  digits,'' \url{http://yann.lecun.com/exdb/mnist/index.html}.

\bibitem{xu2017information}
A.~Xu and M.~Raginsky, ``Information-theoretic analysis of generalization
  capability of learning algorithms,'' in \emph{Advances in Neural Information
  Processing Systems}, 2017, pp. 2524--2533.

\bibitem{boyd2004convex}
S.~Boyd and L.~Vandenberghe, \emph{Convex optimization}.\hskip 1em plus 0.5em
  minus 0.4em\relax Cambridge University Press, 2004.

\bibitem{vershynin2018high}
R.~Vershynin, \emph{High-dimensional probability: An introduction with
  applications in data science}.\hskip 1em plus 0.5em minus 0.4em\relax
  Cambridge University Press, 2018.

\end{thebibliography}
%\section{Future Work}
%We will ask questions about the robustness of alpha loss to noisy labels in classification tasks. We will relate the minimum conditional risk to the performance of an algorithm, considering the tunable behavior of alpha loss.
%\textbf{Comment:} Notice that for positive values of the margin, alpha loss is essentially the same. Thus, when choosing $\alpha$, we are really discriminating between the ways that we want our loss function to handle misclassified examples. Do we want alpha loss to have bounded loss, semi-bounded, or unbounded for these examples?
%Can alpha loss perform like Savage Loss?
%Alpha loss may struggle at larger values of $\alpha$ due to vanishing gradients. We will explore the role of $\alpha$, the noise level, and sample complexity in the future to help us determine $\alpha$. 
\newpage
\section{Appendix}
%The paper can be found at \cite{2019arXiv190204639S}.
\subsection{Proof of Proposition 2}
Consider a soft classifier $g$ and let $P_{\hat{Y}|X}$ be the set of beliefs associated to it. Suppose $f(x) = \sigma^{-1}(g(x))$, where $g(x) = P_{\hat{Y}|X}(1|x)$. We want to show that 
\begin{equation} \label{desiredeq} l^{\alpha}(y,P_{\hat{Y}|X=x}) = \tilde{l}^{\alpha}(yf(x)).\end{equation} 
We assume that $\alpha \in (1,\infty)$. Note that the cases where $\alpha = 1$ and $\alpha = \infty$ follow similarly. 
\par
Suppose that $g(x) = P_{\hat{Y}|X}(1|x) = \sigma(f(x))$. If $y = 1$, then \begin{align} l^{\alpha}(1,P_{\hat{Y}|X}(1|x)) &= l^{\alpha}(1,\sigma(f(x))) \\ &= \frac{\alpha}{\alpha - 1}[1 - \sigma(f(x))^{1 - 1/\alpha}] \\ 
&= \tilde{l}^{\alpha}(f(x)).\end{align} 
If $y = -1$, then \begin{align} l^{\alpha}(-1,P_{\hat{Y}|X}(-1|x)) &= l^{\alpha}(-1,1 - P_{\hat{Y}|X}(1|x))
\\ &= l^{\alpha}(-1,1 - \sigma(f(x))) \\ \label{step} &= l^{\alpha}(-1,\sigma(-f(x))) \\ &= \frac{\alpha}{\alpha - 1}[1 - \sigma(-f(x))^{1 - 1/\alpha}]\\ &= \tilde{l}^{\alpha}(-f(x)),\end{align} where \eqref{step} follows from
\begin{equation}\label{simprop} \sigma(x) + \sigma(-x) = 1, \end{equation}
which can be observed by \eqref{eq:DefSigmoid}.
To show the reverse direction of \eqref{desiredeq} we substitute \begin{equation} f(x) = \sigma^{-1}(g(x)) = \sigma^{-1}(P_{\hat{Y}|X}(1|x))\end{equation}
in $\tilde{l}^{\alpha}(yf(x))$. 
For $y = 1$, \begin{align} \tilde{l}^{\alpha}(f(x)) &= \tilde{l}^{\alpha}(\sigma^{-1}(P_{\hat{Y}|X}(1|x))) 
\\ &= \frac{\alpha}{\alpha - 1}[1 - (\sigma(\sigma^{-1}(P_{\hat{Y}|X}(1|x))))^{1 - 1/\alpha}]
\\&=\frac{\alpha}{\alpha - 1}[1 - P_{\hat{Y}|X}(1|x)^{1 - 1/\alpha}]
\\&= l^{\alpha}(1,P_{\hat{Y}|X}(1|x)).\end{align} 
For $y = -1$, \begin{align} \tilde{l}^{\alpha}(-f(x)) &= \tilde{l}^{\alpha}(-\sigma^{-1}(P_{\hat{Y}|X}(1|x))) 
\\ &= \frac{\alpha}{\alpha - 1}[1 - \sigma(-\sigma^{-1}(P_{\hat{Y}|X}(1|x)))^{1 - 1/\alpha}]
\\ \label{step2} &= \frac{\alpha}{\alpha - 1}[1 - (1-\sigma(\sigma^{-1}(P_{\hat{Y}|X}(1|x))))^{1 - 1/\alpha}]
\\ &= \frac{\alpha}{\alpha - 1}[1 - P_{\hat{Y}|X}(-1|x)^{1 - 1/\alpha}]
\\&= l^{\alpha}(-1, P_{\hat{Y}|X}(-1|x)),\end{align} where \eqref{step2} follows from \eqref{simprop}.
\par
The equality in the results of the minimization procedures follows from the equality between $l^{\alpha}$ and $\tilde{l}^{\alpha}$. As was shown in \cite{liao2018tunable}, the minimizer of the left-hand-side is 
\begin{equation} 
P^{*}_{\hat{Y}|X}(y|x) = \dfrac{P_{Y|X}(y|x)^{\alpha}}{\sum\limits_{y} P_{Y|X} (y|x)^{\alpha}}.
\end{equation} 
Using $f(x) = \sigma^{-1}(P_{\hat{Y}|X}(1|x))$, $f^{*}(x) = \sigma^{-1}(P^{*}_{\hat{Y}|X}(1|x))$.

\subsection{Proof of Theorem 1}
Suppose $\alpha = 1$, then $\tilde{l}^{\alpha}$ becomes 
\begin{equation} \tilde{l}^{1}(z) = -\log{(\sigma(z))} = \log{(1+e^{-z})},\end{equation} which is logistic loss. By solving the minimization procedure in \eqref{cc}, it can be shown as in \cite{bartlett2006convexity} that $\tilde{l}^{1}$ is classification-calibrated. Further, the optimal classifier and minimum conditional risk of logistic loss are given in \cite{masnadi2009design}. 

Suppose $\alpha = +\infty$, then $\tilde{l}^{\alpha}$ becomes 
\begin{equation}
\tilde{l}^{\infty}(z) = 1 - \sigma(z) = \dfrac{e^{z}}{1+e^{z}},
\end{equation}
which is sigmoid loss. Similarly, sigmoid loss can be shown to be classification-calibrated as is given in \cite{bartlett2006convexity}. It can be verified by calculating the minimization procedure in \eqref{cc} that the optimal classifier for sigmoid loss is degenerate. That is, \begin{equation} f^{*}(+\infty,\eta) = \begin{cases} +\infty & \eta > 1/2 \\ -\infty & \eta < 1/2. \\ \end{cases} \end{equation} Therefore, $C_{\tilde{l}^{\infty}}(\eta,f^{*}) = \min\{\eta, 1-\eta\}.$ Note that sigmoid loss and $0$-$1$ loss have the same minimum conditional risk. Thus, sigmoid loss can be viewed as a smoothed version of $0$-$1$ loss and will similarly suffer from vanishing gradients for most values of the margin. 
\par Now consider $\alpha \in (1, +\infty)$. Since classification calibration requires proving \eqref{cc}, we begin by expanding the inequality in \eqref{cc}  %\eqref{mar}  
using $\tilde{\ell}$ in \eqref{mar} to show that $\forall \eta \neq 1/2$,
\begin{equation} \label{plug2}
%\begin{split}
%\inf\limits_{f:f(2\eta - 1) \leq 0}(\eta\tilde{l}^{\alpha}(f) + (1 - \eta)\tilde{l}^{\alpha}(-f)) \\
%> \inf\limits_{f\in\mathbb{R}}(\eta\tilde{l}^{\alpha}(f) + (1 - \eta)\tilde{l}^{\alpha}(-f)).
%\end{split}
\inf_{f:f(2\eta - 1) \leq 0}(\eta \tilde{l}(f) + (1 - \eta)\tilde{l}(-f)) > \inf_{f\in\mathbb{R}}(\eta \tilde{l}(f) + (1 - \eta)\tilde{l}(-f)).
\end{equation}
Without loss of generality, we assume that $\eta > 1/2$. The strategy of the proof is to demonstrate that for $\eta>1/2$, $f^{*}>0$, which means that the right-hand-side of \eqref{plug2} is smaller than the left-hand-side because the attainer of the infimum is not in the search-space of the left-side's infimum.
We rearrange the right-hand-side of \eqref{plug2} to obtain
\begin{equation} \label{minrisk}
\dfrac{\alpha}{\alpha - 1} \bigg[1 - \sup\limits_{f\in\mathbb{R}} \Big[\eta\Big(\dfrac{1}{1 + e^{-f}}\Big)^{1 - 1/\alpha} + (1 - \eta)\Big(\dfrac{1}{1 + e^{f}}\Big)^{1 - 1/\alpha}\Big]\bigg].
\end{equation}
We take the derivative of the expression inside the supremum, which we denote $g(\eta,\alpha,f)$, and obtain
\begin{equation} \label{derg}
\begin{split}
\dfrac{d}{df} g(\eta,\alpha,f) = \Big(1 - \dfrac{1}{\alpha}\Big)\Big(\dfrac{1}{e^{f} + 2 + e^{-f}}\Big)\Big[\eta\Big({1+e^{-f}}\Big)^{1/\alpha} \\ - (1-\eta)\Big({1+e^{f}}\Big)^{1/\alpha}\Big].
\end{split}
\end{equation}
One can then obtain the $f_0$ minimizing \eqref{minrisk} by setting $\dfrac{d}{df} g(\eta,\alpha,f) = 0$, i.e., \begin{equation} \label{deriv}
\eta\Big({1+e^{-f_{0}}}\Big)^{1/\alpha} = (1-\eta)\Big({1+e^{f_{0}}}\Big)^{1/\alpha}.
\end{equation} 
Note that the derivative $dg(\eta,\alpha,f)/df$ in \eqref{derg} approaches zero for both $f \rightarrow +\infty$ and $f \rightarrow -\infty$ for which $g$ simplifies to $\eta$ and $(1-\eta)$, respectively. Since $\eta>1/2$, to show that $f_{0} \in (-\infty,\infty)$ is the point at which $g(\eta,\alpha,f)$ is maximized, we must demonstrate that $g(\eta,\alpha,f_{0}) > \eta > 1/2$. We solve \eqref{deriv} for $(1-\eta)$ and substitute it into $g(\eta,\alpha,f_{0})$. Further simplifying, we obtain $\eta (1+e^{-f_{0}})^{1/\alpha}$ which is always greater than $\eta$. Therefore, $f_{0}$ is the maximizer of $g(\eta,\alpha,f)$. Solving \eqref{deriv} for $f_{0}$, we obtain 
\begin{equation} 
\label{attainer} 
f_{0} = f^{*}(\alpha,\eta) = \alpha \log{\Big(\dfrac{\eta}{1-\eta}\Big)}>0, \end{equation} 
%To show that $\tilde{l}^{\alpha}$ is classification-calibrated and since we assumed without loss of generality that $\eta>1/2$, we want $f^{*} > 0$. Since $\eta>1/2$ and $\alpha \in (1,+\infty)$, $f^{*}(\alpha,\eta) > 0$. 
i.e., $\tilde{l}^{\alpha}$ is classification-calibrated. Since \eqref{attainer} minimizes the right side of \eqref{plug2}, it is the optimal classifier for $\tilde{l}^{\alpha}$ where $\alpha \in (1,+\infty)$. Accordingly, $C_{\tilde{l}^{\alpha}}(\eta,f^{*})$ is obtained by substituting \eqref{attainer} into \eqref{minrisk}.

%%%%%%%
\subsection{Proof of Proposition 3}
For $\alpha = 1$, $\tilde{l}^{1}(z) = -\log{\sigma(z)}$. Further, \begin{equation} 
\dfrac{d^{2}}{dz^{2}} \tilde{l}^{1}(z) = \dfrac{e^{-z}}{(1+e^{-z})^{2}} \geq 0,
\end{equation}
$\forall z \in \mathbb{R}$, so $\tilde{l}^{1}$ is convex.
\par
For $\alpha \in (1,\infty)$, 
\begin{equation} \label{2ndderiv}
\dfrac{d^2}{dz^{2}} \tilde{l}^{\alpha}(z) = \dfrac{(e^{-z}+1)^{1/\alpha}e^{z}(\alpha e^z - \alpha + 1)}{\alpha(e^{z} + 1)^{3}}.
\end{equation}
As can be observed in the numerator for $\alpha > 1$, there exists some $z_{0}$ for which $\alpha e^{z_{0}} - \alpha + 1 < 0$. Thus $\tilde{l}^{\alpha}$ is not convex for $\alpha \in (1,\infty)$. 
Similarly as can be seen in \eqref{2ndderiv} by letting $\alpha \rightarrow \infty$, that $\dfrac{d^2}{dz^{2}} \tilde{l}^{\infty}(z) = \dfrac{e^z(e^z - 1)}{(e^z + 1)^3}$, which is less than zero for $z<0$. Thus, $\tilde{l}^{\infty}$ is also not convex. 
\par
It can be shown that, for all $ \alpha \in [1,\infty]$,  $\tilde{l}^{\alpha}$ is monotonically decreasing since 
\begin{equation}
\dfrac{d}{dz}\tilde{l}^{\alpha}(z) = \dfrac{-(e^{-z} + 1)^{1/\alpha}e^{z}}{(1+e^{z})^{2}} < 0,
\end{equation}
$\forall z \in \mathbb{R}.$
Since monotonic functions are quasi-convex \cite{boyd2004convex}, we have that $\tilde{l}^{\alpha}$ is quasi-convex for $\alpha > 1$.
\par
With regards to the minimum conditional risk, for $\alpha = 1$, it can be shown that $\dfrac{d^2}{d\eta^2} C_{\tilde{l}^{1}}(\eta,f^{*}) = \dfrac{1}{(\eta-1)\eta} < 0$ since $\eta \in (0,1)$. Despite a cumbersome expression, one can similarly verify that, for $\alpha \in (1,\infty)$, $C_{\tilde{l}^{\tilde{\alpha}}}(\eta,f^{*})$ is concave. For $\alpha = \infty$, $C_{\tilde{l}^{\tilde{\infty}}}(\eta,f^{*}) = \min\{\eta,1-\eta\}$ can be easily verified to be concave as a function of $\eta$.

\subsection{Background for Theorem~\ref{thm:Landscape}}
\label{Appendix:BackgroundMai}

The proof of Theorem 2 relies on a result by Mei {\it et al.}~\cite{mei2018landscape} stated at the end of this section. We start by providing the necessary background. 

% We start recalling the definition of sub-Gaussianity and sub-exponentiality for the real-valued and vector-valued random variables.

% \begin{definition}
% A random variable $X$ is called $\sigma^2$-sub-Gaussian if, for every $\lambda\in\mathbb{R}$,
% \begin{equation}
% \mathbb{E}[e^{\lambda(X-\mathbb{E}X)}] \leq e^{\sigma^2 \lambda^2/2}.
% \end{equation}
% \end{definition}

\begin{definition} A random vector $X\in \mathbb{R}^{d}$ is $\sigma^{2}$-sub-Gaussian if, for every $\lambda\in \mathbb{R}^{d}$,
\begin{equation}
\mathbb{E}[e^{\langle \lambda,X - \mathbb{E}[X] \rangle}] \leq e^{\sigma^{2} \|\lambda\|^{2}_{2}/2},
\end{equation}
where $\langle \cdot,\cdot \rangle$ denotes the inner product. 
\end{definition}

Gaussian and bounded random variables are examples of sub-Gaussian random variables, see, for example, \cite{vershynin2018high}. It can be shown that if the components of a random vector are sub-Gaussian, then the random vector itself is sub-Gaussian \cite{vershynin2018high}.

% \begin{definition}
% A random variable $X$ is said to be $\tau$-sub-exponential if, for every $|\lambda|\leq\tau^{-1}$,
% \begin{equation}
% \mathbb{E}[e^{\lambda(X-\mathbb{E}X)}] \leq e^{\lambda\tau^2/2}.
% \end{equation}
% \end{definition}

\begin{definition}
A random matrix $Z$ is $\tau^{2}$-sub-exponential if, for every $\lambda \in B_{d}(1/\tau)$, 
\begin{equation}
\mathbb{E}\left[e^{|Z_\lambda - \mathbb{E}[Z_\lambda]|}\right] \leq 2,
\end{equation}
where $Z_\lambda := \langle \lambda,Z\lambda\rangle$ and $B_d(r)$ denotes the ball of radius $r$ in $d$-dimensional Euclidean space.
\end{definition}

% An interesting relation between sub-Gaussian and sub-exponetial random variables is that the product of two sub-Gaussian random variables is sub-exponential, see, e.g., \cite{vershynin2018high}.

% We require the following result about sub-exponential random variables in our proof.

% \begin{prop}[{\cite[Appendix,Thm.~1]{mei2016landscape}}]
% (Bernstein Inequality for sub-exponential random variables). Let $X_{1}, \ldots, X_{n}$ be independent sub-exponential random variables with $\|X_{i}\|_{\psi_{1}} \leq b$ and define $S_{n} \equiv \sum\limits_{i = 1}^{n}(X_{i} - \mathbb{E}X_{i})$. Then there exists a universal constant $c$ such that, for all $t > 0$, 
% \begin{equation}
% \mathbb{P}(S_{n} \geq t) \leq \exp{\left\{-c\min{\left(\frac{t^2}{nb^{2}}, \frac{t}{b}\right)}\right\}}.
% \end{equation}
% \end{prop}

We now recall the definition of a regularity property known as strongly Morse. Let $[d] := \{1,2,\ldots,d\}$.

\begin{definition}
We say that a twice differentiable function $F: B_{d}(r) \rightarrow \mathbb{R}$ is $(\epsilon,\eta)$-strongly Morse if $\|\nabla F(x) \|_{2} > \epsilon$ for $\|x\|_{2} = r$ and, for any $x \in \mathbb{R}^{d}$, $\|x\|_{2} < r$, the following holds: 
\begin{equation}
\label{eq:DefMorse}
\|\nabla F(x)\|_{2} \leq \epsilon \implies \min\limits_{i \in [d]} |\lambda_{i}(\nabla^{2}F(x))| \geq \eta,
\end{equation}
where $\{\lambda_i(\nabla^{2}F(x)): i\in[d]\}$ are the eigenvalues of $\nabla^{2}F(x)$.
\end{definition}

Now we are in position to state Mei {\it et al.}~result.

\begin{prop}[{\cite[Thm.~2]{mei2018landscape}}] \label{montanari}
Let $l$ be a given loss function. Assume that
\begin{itemize}
    \item[1)] the gradient $\nabla_{\theta} l(\theta)$ is sub-Gaussian;
    \item[2)] the Hessian $\nabla^{2}_{\theta} l(\theta)$ is sub-exponential;
    \item[3)] the Hessian $\nabla_{\theta}^{2} R_{l}(\theta)$ is bounded at a point and Lipschitz continuous with integrable Lipschitz constant, i.e, there exists $J_{*}$ such that
    \begin{equation}
    \label{eq:DefJz}
    J(\mathbf{z}) \equiv \sup\limits_{\theta_{1} \neq \theta_{2} \in B^{p}(r)} \dfrac{\|\nabla^{2}l(\theta_{1};\mathbf{z}) - \nabla^{2}l(\theta_{2};\mathbf{z})\|_{op}}{\|\theta_{1} - \theta_{2}\|_{2}},
    \end{equation} 
    where $\mathbb{E}[J(\mathbf{Z})] \leq J_{*}$;
    \item[4)] $R_{l}(\theta)$ is $(\epsilon,\eta)$-strongly Morse.
\end{itemize}
Let $\hat{\theta}_{n}$ denote a local minimizer of the empirical risk function $\theta \mapsto \hat{R}_{l}(\theta)$. If the sample size $n$ is large enough, then there exists a critical point $\hat{\theta}$ of the true risk function $\theta \mapsto R_{l}(\theta)$ such that, with probability at least $1-\delta$,
%Then, the empirical risk of $l$ is $(\epsilon/2,\eta/2)$-strongly Morse. Further, there is a one-to-one correspondence between the critical points of the empirical risk and the population risk such that 
\begin{equation}
\|\hat{\theta}_{n} - \hat{\theta}\|_{2} \leq C\sqrt{\frac{\log{n}}{n}},
\end{equation}
where $C = C(\sigma,\alpha,\epsilon,\eta,d)$ is a positive constant. Further, $\hat{R}_{l}(\theta)$ is $(\epsilon/2,\eta/2)$-strongly Morse.
\end{prop}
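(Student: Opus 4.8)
\emph{(Proposition~\ref{montanari} restates \cite[Thm.~2]{mei2018landscape}; what follows is the argument one would reconstruct.)} The plan is to show that, uniformly over the compact domain $B_d(r)$, the empirical gradient field $\nabla\hat R_l$ and Hessian field $\nabla^2\hat R_l$ are within $\Delta_n = O(\sqrt{\log(n)/n})$ of their population counterparts in the relevant norms, and then to run a quantitative inverse-function-theorem argument that attaches each critical point of $\hat R_l$ to a unique nearby critical point of $R_l$, with non-degeneracy carried along by eigenvalue perturbation.

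\textbf{Step 1 (uniform concentration).} Pointwise, $\nabla\hat R_l(\theta)-\nabla R_l(\theta)=\frac{1}{n}\sum_{i=1}^n\bigl(\nabla l(\theta;z_i)-\mathbb{E}[\nabla l(\theta;Z)]\bigr)$ is an average of i.i.d.\ centered $\sigma^2$-sub-Gaussian vectors by the sub-Gaussian gradient hypothesis, so its Euclidean norm has an exponential tail $\exp(-cnt^2/\sigma^2)$; likewise, by the sub-exponential Hessian hypothesis, $\nabla^2\hat R_l(\theta)-\nabla^2 R_l(\theta)$ is an average of centered $\tau^2$-sub-exponential matrices, with a Bernstein-type tail. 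To pass from pointwise to uniform over $B_d(r)$ I would take an $\varepsilon$-net of cardinality $(3r/\varepsilon)^d$, union-bound over it, and control the off-net oscillation using that $\theta\mapsto\nabla\hat R_l(\theta)$ is Lipschitz with constant $\sup_\theta\|\nabla^2\hat R_l(\theta)\|_{op}$ (which concentrates around the bounded population value supplied by the Lipschitz Hessian hypothesis) and $\theta\mapsto\nabla^2\hat R_l(\theta)$ is Lipschitz with constant $n^{-1}\sum_i J(z_i)$ (which concentrates around $J_*$, again the Lipschitz Hessian hypothesis). Taking $\varepsilon\asymp 1/n$ balances net size against oscillation and yields, with probability at least $1-\delta$, both $\sup_{\theta\in B_d(r)}\|\nabla\hat R_l(\theta)-\nabla R_l(\theta)\|_2\le\Delta_n$ and $\sup_{\theta\in B_d(r)}\|\nabla^2\hat R_l(\theta)-\nabla^2 R_l(\theta)\|_{op}\le\Delta_n$, where $\Delta_n := C\sqrt{\log(n)/n}$ and $C$ absorbs the dependence on $d$, $\sigma$, $\tau$, and $\log(1/\delta)$.

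\textbf{Step 2 (localization, bijection, and the rate).} By the strongly Morse hypothesis, $\|\nabla R_l\|_2>\epsilon$ on $\partial B_d(r)$, so every critical point of $R_l$ lies in the interior; at each such point the Hessian has eigenvalue moduli $\ge\eta$, hence is invertible, so the critical points are isolated and, by compactness of $\overline{B_d(r)}$, finite in number, say $\theta_1,\dots,\theta_m$. Fix a radius $\rho\asymp\eta/J_*$ and let $B(\theta_j,\rho)$ be the ball of radius $\rho$ about $\theta_j$; the Lipschitz Hessian hypothesis keeps $\nabla^2 R_l$ within $\eta/2$ of $\nabla^2 R_l(\theta_j)$ throughout $B(\theta_j,\rho)$, so on that ball $\nabla R_l$ is injective with $\|\nabla R_l(\theta)\|_2\ge(\eta/2)\|\theta-\theta_j\|_2$ and $\theta_j$ is its only zero. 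Now let $\hat\theta_n$ be a local minimizer of $\hat R_l$, so $\nabla\hat R_l(\hat\theta_n)=0$. On the compact set $\overline{B_d(r)}\setminus\bigcup_j B(\theta_j,\rho)$ the positive continuous function $\|\nabla R_l\|_2$ is bounded below by some $c>0$, so on the event of Step~1 we have $\|\nabla\hat R_l\|_2\ge c-\Delta_n>0$ there once $n$ is large, forcing $\hat\theta_n\in B(\theta_j,\rho)$ for some $j$; put $\hat\theta:=\theta_j$. Then $(\eta/2)\|\hat\theta_n-\hat\theta\|_2\le\|\nabla R_l(\hat\theta_n)\|_2=\|\nabla R_l(\hat\theta_n)-\nabla\hat R_l(\hat\theta_n)\|_2\le\Delta_n$, so $\|\hat\theta_n-\hat\theta\|_2\le(2/\eta)\Delta_n\le C\sqrt{\log(n)/n}$, the claimed bound. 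A Banach-fixed-point argument on the Newton map $\theta\mapsto\theta-[\nabla^2 R_l(\theta_j)]^{-1}\nabla\hat R_l(\theta)$ — a contraction of $B(\theta_j,\rho)$ into itself on the event of Step~1 — shows in addition that $\hat\theta_n$ is the unique critical point of $\hat R_l$ in $B(\theta_j,\rho)$, so the critical points of $\hat R_l$ are in bijection with $\theta_1,\dots,\theta_m$.

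\textbf{Step 3 (Morse transfer) and the main obstacle.} Decomposing $\nabla^2\hat R_l(\hat\theta_n)-\nabla^2 R_l(\hat\theta)=[\nabla^2\hat R_l(\hat\theta_n)-\nabla^2 R_l(\hat\theta_n)]+[\nabla^2 R_l(\hat\theta_n)-\nabla^2 R_l(\hat\theta)]$, the first bracket is $\le\Delta_n$ by Step~1 and the second is $\le J_*\|\hat\theta_n-\hat\theta\|_2\le(2J_*/\eta)\Delta_n$ by the Lipschitz Hessian hypothesis and Step~2, both tending to $0$. By Weyl's inequality the eigenvalues of $\nabla^2\hat R_l(\hat\theta_n)$ lie within $o(1)$ of those of $\nabla^2 R_l(\hat\theta)$, which have modulus $\ge\eta$, hence modulus $\ge\eta/2$ for $n$ large; the same perturbation bound gives $\|\nabla\hat R_l\|_2\ge\epsilon-\Delta_n\ge\epsilon/2$ on $\partial B_d(r)$ — and together these are precisely the $(\epsilon/2,\eta/2)$-strongly Morse conditions for $\hat R_l$. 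The main obstacle is Step~1 together with the uniqueness/bijection claim in Step~2: obtaining the uniform gradient and Hessian deviations at the sharp $\sqrt{\log(n)/n}$ rate via chaining over a net whose mesh is tuned against the Hessian-Lipschitz constant, and then making the quantitative inverse-function-theorem argument precise so that no critical point of $\hat R_l$ migrates to $\partial B_d(r)$ or is spuriously created — which is exactly where the Lipschitz Hessian and strongly Morse hypotheses (integrable Hessian-Lipschitz constant, strictly positive gradient on the boundary, and uniform non-degeneracy of the Hessian wherever the gradient is small) are indispensable.
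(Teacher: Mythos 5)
The paper does not prove this proposition at all: it is quoted verbatim from \cite[Thm.~2]{mei2018landscape} as imported background for Theorem~\ref{thm:Landscape}, so there is no in-paper argument to compare yours against. Your reconstruction does follow the actual strategy of Mei \emph{et al.}'s proof --- uniform concentration of the empirical gradient and Hessian over $B_d(r)$ at rate $\sqrt{\log(n)/n}$ via covering nets and sub-Gaussian/sub-exponential tail bounds, followed by a quantitative inverse-function-theorem step that localizes each empirical critical point inside a ball around a unique, nondegenerate population critical point, with the $(\epsilon/2,\eta/2)$-strong Morse property transferred by Weyl's eigenvalue perturbation inequality --- so as a sketch of the cited result it is sound. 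Two places deserve more care if you were to flesh it out: (i) your lower bound $\|\nabla R_l(\theta)\|_2\ge(\eta/2)\|\theta-\theta_j\|_2$ needs the observation that the path-averaged Hessian $\int_0^1\nabla^2 R_l(\theta_j+t(\theta-\theta_j))\,\dd{t}$ stays within $\eta/2$ of $\nabla^2 R_l(\theta_j)$ in operator norm and hence has smallest singular value $\ge\eta/2$ even when $\theta_j$ is a saddle (mixed-sign eigenvalues), which is exactly what the Lipschitz-Hessian radius $\rho\asymp\eta/J_*$ buys you; and (ii) the hypothesis only gives $\mathbb{E}[J(\mathbf{Z})]\le J_*$, so the empirical Hessian-Lipschitz constant $n^{-1}\sum_i J(z_i)$ is controlled by Markov's inequality rather than genuine concentration --- sufficient for the off-net oscillation bound, but not the ``concentrates around $J_*$'' you assert. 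Neither point changes the structure of the argument.
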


\subsection{Proof that $l^{\alpha}$ satisfies the Assumptions of Proposition 4}
\label{Appendix:AssumptionsMai}

Here, we prove the assumptions stipulated by Proposition~\ref{montanari} hold for $\alpha$-loss. We restrict ourselves to the setting of logistic regression. Thus, $\hat{R}_{l^{\alpha}}(\theta) = \hat{R}_{l^{\alpha}}(g_{\theta})$ and $R_{l^{\alpha}}(\theta) = R_{l^{\alpha}}(g_{\theta})$, where $g_{\theta}(x) = \sigma(\theta \cdot x)$ and $g_{\theta}(x)$ is often abbreviated $g_{\theta}$ for convenience. 

\subsubsection*{Proof of Assumption 1}

The first assumption requires the gradient of the loss function to be sub-Gaussian. The gradient of $\alpha$-loss is given by \eqref{1stderiv}. That is, 
\begin{equation} \label{1stderivative}
\nabla_\theta l^{\alpha}(Y,g_\theta(X)) = F_{1}(\alpha,\theta,X,Y)X,
\end{equation}
where 
\begin{align}
\nonumber F_1(\alpha,\theta,x,y) &= \frac{1-y}{2}g_\theta(x)(1-g_\theta(x))^{1-1/\alpha}\\
\label{F1} & \quad \quad - \frac{1+y}{2} g_\theta(x)^{1-1/\alpha}(1-g_\theta(x)).
\end{align}
In order to prove \eqref{F1}, we used that
\begin{equation} \label{sigprop}
\dfrac{\partial}{\partial \theta} \sigma(\theta \cdot x) = \sigma(\theta \cdot x)(1-\sigma(\theta \cdot x)).
\end{equation}
By the boundedness of the sigmoid function, we have that $|F_{1}(\alpha,\theta,X,Y)| \leq 1$. Since $X\in B_d(r)$ by assumption, each component of $\nabla_{\theta} l^{\alpha}$ is bounded and, as a consequence, sub-Gaussian. Therefore, the gradient of $\alpha$-loss is sub-Gaussian.

\subsubsection*{Proof of Assumption 2}

\begin{figure*}[ht]
\begin{equation}
%\begin{IEEEeqnarray}{rCl}
\label{eq:DefF2}
F_2(\alpha,\theta,x,y) = \dfrac{1 - y}{2}\Big[g_{\theta} (1 - g_{\theta})^{2 - 1/\alpha} - \left(1 - \dfrac{1}{\alpha}\right)g_{\theta}^{2} (1 - g_{\theta})^{1 - 1/\alpha}\Big] +\dfrac{1+y}{2} \Big[g_{\theta}^{2 - 1/\alpha}(1- g_{\theta}) - \left(1 - \dfrac{1}{\alpha}\right)g_{\theta}^{1 - 1/\alpha}(1 - g_{\theta})^{2} \Big]
%\end{IEEEeqnarray}
\end{equation}
\end{figure*}

The second assumption requires the Hessian of the loss function to be sub-exponential. It can be shown that the Hessian has the form
\begin{equation} \label{nab2}
\nabla^{2}_{\theta}l^{\alpha}(Y,g_{\theta}(X)) = F_{2}(\alpha,\theta,X,Y)XX^{T},
\end{equation}
where $F_2(\alpha,\theta,X,Y)$ is defined on \eqref{eq:DefF2}. It is straightforward to verify that $|F_{2}(\alpha,\theta,X,Y)|\leq \frac{1}{4}$. Notice that the product of $\nabla^{2}_{\theta}l^{\alpha}$ with $\lambda \in B^{p}(1)$ becomes 
\begin{equation} \label{summation}
\langle \lambda,\nabla^{2}_{\theta}l^{\alpha}\lambda\rangle = \left(F_2(\alpha,\theta,X,Y)^{1/2} \sum_{i=1}^d \lambda_i X_i\right)^2.
\end{equation}
Since both $\theta$ and $X$ are assumed to be bounded, $\langle \lambda,\nabla^{2}_{\theta}l^{\alpha}\lambda\rangle$ is the square of a bounded random variable. Since the square of a sub-gaussian random variable is sub-exponential, we conclude that the Hessian is sub-exponential.

\subsubsection*{Proof of Assumption 3}

\begin{figure*}[hb]
\begin{equation}
%\begin{IEEEeqnarray}{rCl}'
\begin{split}
F_{3}(\alpha,\theta,x,y) &= \dfrac{1-y}{2}\Big[g_{\theta}(1-g_{\theta})^{3-\frac{1}{\alpha}} 
% %\\- \left(2-\dfrac{1}{\alpha}\right)h_{\theta}^{2}(1-h_{\theta})^{2-\frac{1}{\alpha}} 
- \left(4+\dfrac{1}{\alpha}\right)g_{\theta}^{2}(1-g_{\theta})^{2-\frac{1}{\alpha}} + \left(1-\dfrac{1}{\alpha}\right)^{2}g_{\theta}^{3}(1-g_{\theta})^{1-\frac{1}{\alpha}}\Big]\\ \label{eq:DefF3}
& \quad \quad \quad -\dfrac{1+y}{2}\Big[g_{\theta}^{3 - \frac{1}{\alpha}}(1-g_{\theta})
-\left(4 + \dfrac{1}{\alpha}\right)(1-g_{\theta})^{2}g_{\theta}^{2-\frac{1}{\alpha}}
+ \left(1-\dfrac{1}{\alpha}\right)^{2}(1-g_{\theta})^{3}g_{\theta}^{1-\frac{1}{\alpha}}\Big]%
\end{split}
%\end{IEEEeqnarray}
\end{equation}
\end{figure*}

The third required assumption is that the Hessian of the loss function is Lipschitz and the Hessian of the population risk is bounded above at a point. The former can be observed by calculating the third derivative of $\alpha$-loss and showing that it is bounded. The third derivative has the form
\begin{equation}
\dfrac{\partial}{\partial \theta_{i}} \nabla^{2}_{\theta}l^{\alpha}(Y,g_{\theta}(X)) = F_{3}(\alpha,\theta,X,Y)XX^{T}X_{i}, 
\end{equation}
where $F_{3}(\alpha,\theta,X,Y)$ is defined in \eqref{eq:DefF3}.
% \begin{figure*} 
% \begin{equation}
% %\begin{IEEEeqnarray}{rCl}
% F_{3}(\alpha,\theta,X,Y) = \dfrac{1-y}{2}\Big[g_{\theta}(1-g_{\theta})^{3-\frac{1}{\alpha}} \IEEEnonumber
% % %\\- \left(2-\dfrac{1}{\alpha}\right)h_{\theta}^{2}(1-h_{\theta})^{2-\frac{1}{\alpha}} 
% - \left(4+\dfrac{1}{\alpha}\right)g_{\theta}^{2}(1-g_{\theta})^{2-\frac{1}{\alpha}} \IEEEnonumber
% + \left(1-\dfrac{1}{\alpha}\right)^{2}g_{\theta}^{3}(1-g_{\theta})^{1-\frac{1}{\alpha}}\Big] \IEEEnonumber
% -\dfrac{1+y}{2}\Big[g_{\theta}^{3 - \frac{1}{\alpha}}(1-g_{\theta}) \IEEEnonumber
% -\left(4 + \dfrac{1}{\alpha}\right)(1-g_{\theta})^{2}g_{\theta}^{2-\frac{1}{\alpha}}\IEEEnonumber
% + \left(1-\dfrac{1}{\alpha}\right)^{2}(1-g_{\theta})^{3}g_{\theta}^{1-\frac{1}{\alpha}}\Big].%
% %\end{split}
% %\end{IEEEeqnarray}
% \end{equation}
% \end{figure*}
Observe that $|F_{3}(\alpha,\theta,X,Y)|\leq 2$. 
%Also, note that the Lipchtiz constant is independent of the dimension of the space and the number of samples. 
Since $\theta,X\in B_d(r)$ by assumption, the derivative of the Hessian is bounded with constant $L = 2r^{3}$. Therefore, the Hessian is Lipschtiz continuous, in the sense of \eqref{eq:DefJz}, with integrable Lipschitz constant $L$. Using similar arguments, it is straightforward to verify that the Hessian of the population risk is bounded at a point.

% Now, we show that the Hessian of the population risk is bounded at a point. We assume that the underlying distribution on $\mathcal{Z} = \mathcal{X} \times \mathcal{Y}$ is well-behaved. Thus, we may apply the Bounded Convergence Theorem to move the derivatives through the integrals since the derivative exists and is bounded which follows from \eqref{nab2} and \eqref{eq:DefF2}. 
% \begin{align}
% \dfrac{\partial}{\partial \theta_{i}} \dfrac{\partial}{\partial \theta_{j}} R(\theta) &= \dfrac{\partial}{\partial \theta_{i}} \dfrac{\partial}{\partial \theta_{j}} \int_{x} \int_{y} f_{X,Y}(x,y) l^{\alpha}(y,g_{\theta}(x)) dx dy
% \\ &= \int_{x} \int_{y} f_{X,Y}(x,y) \dfrac{\partial}{\partial \theta_{i}} \dfrac{\partial}{\partial \theta_{j}} l^{\alpha}(y,g_{\theta}(x)) dx dy
% \\ &= \int_{x} \int_{y} f_{X,Y}(x,y) F_{2}(\alpha,\theta,x,y)x_{i}x_{j} dx dy
% \\ &\leq \left|\int_{x} \int_{y} f_{X,Y}(x,y) F_{2}(\alpha,\theta,x,y) x_{i}x_{j} dx dy\right|
% \\ &\leq \int_{x} \int_{y} \left|f_{X,Y}(x,y) F_{2}(\alpha,\theta,x,y) x_{i}x_{j}\right| dx dy
% \\ &\leq \frac{1}{4} \int_{x} \int_{y} f_{X,Y}(x,y) |x_{i}x_{j}| dx dy
% \\ &\leq \frac{1}{4} \int_{x} \int_{y} f_{X,Y}(x,y) |x_{i}||x_{j}| dx dy
% \\ \label{exp} &= \frac{1}{4} \mathbb{E}|X_{i}| \mathbb{E}
% |X_{j}|,
% \end{align}
%Therefore, by the bounedness of the domain and \eqref{F2} the Hessian of the population risk is bounded above at some point.

\subsubsection*{Proof of Assumption 4}
The final assumption requires the population risk to be strongly Morse.
Recall that, for each $y\in\{-1,1\}$, $X^{[y]}$ has the same distribution as $X$ conditioned on $Y=y$. Since $X^{[1]} \stackrel{\textnormal{d}}{=} -X^{[-1]}$ by assumption, conditioning on $Y$ we obtain that
\begin{equation}
    \nabla_\theta R(\theta) = - \mathbb{E}\left[g_\theta(X^{[1]})^{1-1/\alpha}g_\theta(-X^{[1]})X^{[1]}\right].
\end{equation}
Observe that
\begin{align}
    & \|\mathbb{E}[g_\theta(X^{[1]})^{1-1/\alpha}g_\theta(-X^{[1]})X^{[1]}] - \mathbb{E}[X^{[1]}]\|\\
    &= \|\mathbb{E}[(g_\theta(X^{[1]})^{1-1/\alpha}g_\theta(-X^{[1]}) - 1) X^{[1]}]\|\\
    &\leq \mathbb{E}[|g_\theta(X^{[1]})^{1-1/\alpha}g_\theta(-X^{[1]}) - 1| \|X^{[1]}\|],
\end{align}
where we used the convexity of the norm and Jensen's inequality. Since $\theta,X\in B^d(r)$, it can be verified that
\begin{equation}
    \sigma(-r^2)^2 \leq g_\theta(X^{[1]})^{1-1/\alpha}g_\theta(-X^{[1]}) \leq 1.
\end{equation}
Hence,
\begin{align}
    & \|\mathbb{E}[g_\theta(X^{[1]})^{1-1/\alpha}g_\theta(-X^{[1]})X^{[1]}] - \mathbb{E}[X^{[1]}]\|\\
    &\leq (1-\sigma(-r^2)^2) \mathbb{E}[\|X^{[1]}\|].
\end{align}
By the triangle inequality, we obtain that
\begin{equation}
    \sigma(-r^2)^2 \mathbb{E}[\|X^{[1]}\|] \leq \|\nabla_\theta R(\theta)\|.
\end{equation}
By assumption, $\|\mathbb{E}(X)\| \neq 0$, hence $R(\theta) > \epsilon$ for all $\theta$, where
\begin{equation}
    \epsilon:= \sigma(-r^2)^2 \mathbb{E}[\|X^{[1]}\|].
\end{equation}
Therefore, by vacuity, $R(\theta)$ satisfies \eqref{eq:DefMorse} for every $\eta>0$, i.e., $R(\theta)$ is $(\epsilon,\eta)$-strongly Morse.

%The final assumption requires the population risk to be strongly Morse. We begin by calculating the gradient of the population risk. 
%\begin{align}
%\dfrac{\partial}{\partial \theta_{i}} R(\theta) &= \dfrac{\partial}{\partial \theta_{i}} \int_{x} \int_{y} f_{X,Y}(x,y) l^{\alpha}(y,g_{\theta}(x)) dx dy
%\\ &= \int_{x} \int_{y} f_{X,Y}(x,y) \dfrac{\partial}{\partial \theta_{i}} l^{\alpha}(y,g_{\theta}(x)) dx dy
%\\ &= \int_{x} \int_{y} f_{X,Y}(x,y) F_{1}(\alpha,\theta,x,y)x_{i} dx dy
%\end{align}
\subsection{Proof of Corollary 1}

We start by proving that, almost surely,
\begin{equation}
\label{eq:CorollaryNotMargin}
   \lim_{n\to\infty} R_{l^{\alpha}}(g_{\hat{\theta}_{n}}) = \min_{\theta\in\Theta} R_{l^{\alpha}}(g_\theta).
\end{equation}
Let $\theta^*$ be a minimizer of the expected risk, i.e.,
\begin{equation}
R_{l^\alpha}(g_{\theta^*}) = \min_{\theta\in\Theta} R_{l^\alpha}(g_\theta).
\end{equation}
Observe that
\begin{equation}
\label{eq:SplitInII}
    0 \leq R_{l^\alpha}(g_{\hat{\theta}_n}) - R_{l^\alpha}(g_{\theta^*}) = {\rm I}_n + {\rm II}_n,
\end{equation}
where ${\rm I}_n := R_{l^\alpha}(g_{\hat{\theta}_n}) - \hat{R}_{l^\alpha}(g_{\hat{\theta}_n})$ and ${\rm II}_n := \hat{R}_{l^\alpha}(g_{\hat{\theta}_n}) - R_{l^\alpha}(g_{\theta^*})$. After some straightforward manipulations, \eqref{land} implies that, for every $\epsilon>0$,
\begin{equation}
\label{eq:GeneralizationCriticalPoints}
    \mathbb{P}\left(|R_{l^\alpha}(g_{\hat{\theta}_n}) - \hat{R}_{l^\alpha}(g_{\hat{\theta}_n})| > \epsilon\right) < 4mn e^{-n\epsilon^2/(2C_\alpha^2)},
\end{equation}
whenever $n$ is large enough. A routine application of the Borel-Cantelli lemma shows that, almost surely,
\begin{equation}
\label{eq:asLimit1}
    \lim_{n\to\infty} {\rm I}_n = \lim_{n\to\infty} R_{l^\alpha}(g_{\hat{\theta}_n}) - \hat{R}_{l^\alpha}(g_{\hat{\theta}_n}) = 0.
\end{equation}
Since $\hat{\theta}_n$ is a minimizer of the empirical risk $\hat{R}_{l^\alpha}$,
\begin{equation}
    {\rm II}_n = \hat{R}_{l^\alpha}(g_{\hat{\theta}_n}) - R_{l^\alpha}(g_{\theta^*}) \leq \hat{R}_{l^\alpha}(g_{\theta^*}) - R_{l^\alpha}(g_{\theta^*}).
\end{equation}
By Hoeffding's inequality, for every $\epsilon>0$,
\begin{equation}
\label{eq:HoeffdingEmpiricalExpected}
    \mathbb{P}\left(|\hat{R}_{l^\alpha}(g_{\theta^*}) - R_{l^\alpha}(g_{\theta^*})| > \epsilon \right) \leq 2e^{-2n(\alpha-1)^2\epsilon^2/\alpha^2}.
\end{equation}
Hence, the Borel-Cantelli lemma implies that, almost surely,
\begin{equation}
    \lim_{n\to\infty} |\hat{R}_{l^\alpha}(g_{\theta^*}) - R_{l^\alpha}(g_{\theta^*})| = 0.
\end{equation}
In particular, we have that, almost surely,
\begin{equation}
\label{eq:asLimit2}
    \limsup_{n\to\infty} {\rm II}_n \leq 0.
\end{equation}
By plugging \eqref{eq:asLimit1} and \eqref{eq:asLimit2} in \eqref{eq:SplitInII}, we obtain that, almost surely,
\begin{equation}
    0 \leq \limsup_{n\to\infty} \left[R_{l^\alpha}(g_{\hat{\theta}_n}) - R_{l^\alpha}(g_{\theta^*})\right] \leq 0,
\end{equation}
from which \eqref{eq:CorollaryNotMargin} follows.

For each $n\in\mathbb{N}$, let $f_n:\mathcal{X}\to\overline{\mathbb{R}}$ be given by $f_{n}(x) = \hat{\theta}_{n} \cdot x$. Since $f_{n}(x) = \sigma^{-1}(\sigma(\hat{\theta}_{n}\cdot x)) = \sigma^{-1}(g_{\hat{\theta}_{n}}(x))$, Proposition~\ref{prop2} and \eqref{eq:CorollaryNotMargin} imply that
\begin{equation}
\label{eq:CorollaryMargin}
    \lim_{n\to\infty} R_{\tilde{l}^{\alpha}}(f_{\hat{\theta}_{n}}) = \min_{\theta\in\Theta} R_{\tilde{l}^{\alpha}}(f_{\theta}) =: R_{\tilde{l}^{\alpha}}^*.
\end{equation}
Since $\tilde{l}^{\alpha}$ is classification-calibrated, as established in Theorem~\ref{result1}, Proposition~\ref{prop1} and \eqref{eq:CorollaryMargin} imply that
\begin{equation}
\lim_{n\to\infty} R(\hat{\theta}_n) = R^*,
\end{equation}
as required.

\end{document}